\newcommand{\spvc}{\textsf{split-VC}}
\newcommand{\sign}{\textsf{sign}}
\newcommand{\col}{\textsf{col}}
\newcommand{\softmax}{\textsf{Softmax}}
\newcommand{\disj}{\mathrm{Disj}}
\newcommand{\match}[1]{Match#1}
\newcommand{\name}{Strassen}
\newcommand{\hadamard}{\odot}
\definecolor{standardcolor}{RGB}{141, 160, 203} 
\definecolor{namecolor}{RGB}{252, 141, 98}   
\definecolor{thirdordercolor}{RGB}{102, 194, 165} 
\definecolor{triangularcolor}{RGB}{231, 138, 195} 
\newcommand{\triangularcolored}{\textcolor{triangularcolor}{\textbf{Triangular}}}
\newcommand{\standardcolored}{\textcolor{standardcolor}{\textbf{Standard}}}
\newcommand{\thirdordercolored}{\textcolor{thirdordercolor}{\textbf{Third-Order}}}
\newcommand{\namecolored}{\textcolor{namecolor}{\textbf{\name}}}
\title{Strassen Attention, Split VC Dimension and Compositionality in Transformers}
\author{%
  Alexander Kozachinskiy\\
  CENIA\\
  \texttt{alexander.kozachinskyi@cenia.cl} \\
\And
Felipe Urrutia\\
 University of Chile \& CENIA\\
\texttt{furrutia@dim.uchile.cl}
\And
Hector Jimenez \\
University of Chile \& CENIA\\
\texttt{hjimenez@dcc.uchile.cl}
\And
Tomasz Steifer\\
IPPT PAN\\
\texttt{tsteifer@ippt.pan.pl}
\And
Germán Pizarro\\
CENIA\\
\texttt{german.pizarro@cenia.cl}
\And
 Matías Fuentes\\
 IMC UC\\
 \texttt{mdfuentes4@uc.cl}
 \And
  Francisco Meza\\
  IMC UC\\
  \texttt{fjmeza1@uc.cl}
  \And
  Cristian B. Calderon\\
  CENIA\\
  \texttt{cristian.buc@cenia.cl}
  \And
  Cristóbal Rojas\\
  IMC UC \& CENIA\\
  \texttt{luis.rojas@uc.cl}
}
\theoremstyle{plain}
\newtheorem{theorem}{Theorem}[section]
\newtheorem{lemma}[theorem]{Lemma}
\theoremstyle{definition}
\newtheorem{definition}[theorem]{Definition}
\theoremstyle{remark}
\newtheorem{remark}[theorem]{Remark}
\begin{document}

\maketitle

\begin{abstract}
 We propose the first method to show theoretical limitations for one-layer softmax transformers with arbitrarily many precision bits (even infinite).
 We establish those limitations for three tasks that require advanced reasoning. The first task, Match 3 (Sanford et al., 2023), requires looking at all possible token triplets in an input sequence. The second and third tasks address compositionality-based reasoning: function composition (Peng et al., 2024) and binary relations composition, respectively. We formally prove the inability of one-layer softmax Transformers to solve any of these tasks. To overcome these limitations, we introduce Strassen attention and prove that, equipped with this mechanism, a one-layer transformer can in principle solve all these tasks. Importantly, we show that it enjoys sub-cubic running-time complexity, making it more scalable than similar previously proposed mechanisms, such as higher-order attention (Sanford et al., 2023). To complement our theoretical findings, we experimentally studied Strassen attention and compared it against standard (Vaswani et al, 2017), higher-order attention (Sanford et al., 2023), and triangular attention (Bergen et al. 2021). Our results help to disentangle all these attention mechanisms, highlighting their strengths and limitations. In particular, Strassen attention outperforms standard attention significantly on all the tasks. Altogether, understanding the theoretical limitations can guide research towards scalable attention mechanisms that improve the reasoning abilities of Transformers.

\end{abstract}

\section{Introduction}
\label{sec_intro}

\paragraph{What tasks can Transformers solve?} A fundamental question in modern AI is understanding why and under which conditions a given deep network architecture succeeds or fails on a given task. Numerous recent works have shown that tasks requiring compositional reasoning stand out as particularly challenging \cite{dekker2022curriculum,zerroug2022benchmark,marcus2018deep,lake2017building}. Compositionality refers to the ability of composing blocks of knowledge to generate new content or solve new tasks, and is believed to play a major role in the emergence of systematic generalization in language \cite{marcus2003algebraic, chomsky2002syntactic}; making the question for these kind of tasks even more relevant. 

To address this problem, benchmark datasets such as SCAN \cite{lakeGeneralizationSystematicityCompositional2018}, PCFG \cite{hupkes2020compositionality}, CLUTRR \cite{sinha2019clutrr} or CoGS \cite{kim2020cogs} have been introduced. Moreover, different empirical methods have been developed to test, quantify and compare compositional capabilities \cite{KeysersCFQ}, as well as to assess the impact of design choices on the performance of Transformers on compositional tasks \cite{ontanonMakingTransformersSolve2022}. While many of these works provide strong empirical evidence that Transformers may suffer from inherent limitations for compositional tasks \cite{DziriNeurips}, our current theoretical understanding of the underlying phenomenon remains limited. 

Here, we take a deeper dive into the problem and aim at contributing to the study of compositionality in Transformers on a more basic mathematical level. As we shall see, this allows us to pin down simple theoretical obstacles that make compositionality hard for the standard Transformer architecture. In turn, we can devise a new attention mechanism, which transcends these basic limitations. 


\paragraph{Theoretical limitations of Transformers}
A natural first step to explain why models struggle with some tasks is to study their \emph{expressivity}--- whether a given architecture is \emph{in-principle} capable of solving the task in question. That is, whether there exists a choice of parameters that results in a model computing the function underlying the task solutions. If the answer is positive, then the chances are that the difficulty lies in efficiently learning the appropriate parameters. If the answer is negative, then a formal proof of this fact can be directly related to the poor performance observed in practice. 

The first theoretical limitations of this sort were obtained for Transformers using \emph{hardmax} attention \cite{hahn2020theoretical}. Instead of computing attention as a convex combination of all input tokens using \emph{softmax}, one takes a single token where the attention is maximal. Using this simplification, \citeauthor{hahn2020theoretical} showed that hardmax Transformers with $O(1)$ layers cannot compute formal languages such as PARITY, MAJORITY, or Dyck-1. 

Theoretical limitations  against softmax Transformers have recently been obtained by employing a different proof technique, one based on \emph{communication complexity} \cite{peng2024limitations}. The idea is to show that a Transformer solving a given task can be used to construct a corresponding communication protocol for a problem whose communication complexity is known. From this, one obtains lower bounds on the size of Transformers capable of solving the task for inputs of a given length. In \cite{peng2024limitations}, for example, the authors apply this technique to show that any one-layer softmax Transformer that can compute the composition of two functions must have $n^{\Omega(1)}$ embedding dimension, where $n$ is the size of the input. Crucially, for this conclusion to hold, one must assume that the Transformer works with a relatively low number of precision bits, namely sub-linear in $n$.
The technique has subsequently been applied to show lower bounds for other tasks such as string equality \cite{bhattamishra2024separations} and \match{3} \cite{DBLP:conf/nips/SanfordHT23} (see Section \ref{match3} for a definition).

\paragraph{Overcoming Transformers limitations} Equipped with a better theoretical understanding of why Transformers struggle with some tasks, the next question is how they can guide research towards the construction of more expressive machines. A key observation is that the standard attention mechanism can only see interactions between pairs of tokens, whereas compositional tasks require to reason about the interaction of three or more tokens simultaneously \cite{bergen2021systematic,DBLP:conf/nips/SanfordHT23}. Consequently, suitably modified attention mechanisms have been proposed, which would track interactions between more than two tokens. For instance, \emph{triangular attention} \cite{bergen2021systematic} (see Section \ref{sec_prel} for a definition) outperforms standard attention in compositional tasks such as CLUTRR \cite{sinha2019clutrr} or CoGS \cite{kim2020cogs}. Similarly, it has been shown that \emph{higher-order tensor attention} embedded in a constant size one-layer Transformer can theoretically solve Match3, a task that cannot be solved by the standard attention \cite{DBLP:conf/nips/SanfordHT23}. However, both these mechanisms suffer from a significant increase in running-time complexity, e.g., order 3 requires cubic-time, limiting its scalability and potentially affecting its practical relevance. In the case of triangular attention, the mechanism works with an adjacency matrix of a graph (in which case it is square-time in the size of the graph). It is possible to produce an adjacency matrix from the sequence of tokens, as already shown in \citet{bergen2021systematic}, but this increases the complexity to cubic.  The higher-order tensor attention, on top of being cubic-time as well, has been only considered in theoretical work and has not been evaluated empirically yet.

\paragraph{Our contributions} In this work we aim at addressing these fundamental questions from complementary angles. Our main results can be summarized as follows:

\begin{itemize}
    \item \textbf{We present a new technique for proving theoretical limitations.} 
Since previous results establishing limitations for softmax transformers are based on communication complexity, they only apply to transformers that store numbers with relatively few precision bits. This raises the question -- would the use of long arithmetic help to circumvent these limitations? In this paper, we answer this question negatively. In order to establish that, we introduce a new technique for proving limitations, based on the novel notion of \emph{Splitting VC dimension}.   
    
  The splitting VC dimension of a Boolean function $f$, denoted by $\spvc(f)$, is a positive integer number intended to capture the complexity of $f$ in a certain combinatorial sense. Our main technical contribution is the following:
    
\noindent\textit{\textbf{Main Theorem}. Let $T$ be any one-layer standard attention Transformer that can do arithmetic operations over real numbers with infinite precision. Denote by $d$, $H$ and $L$, respectively, its embedding dimension, number of attention heads and size of the output MLP of $T$. Suppose $T$ can compute a function $f$. Then it holds that $\max\{d,H,L\}\geq \spvc(f)^{\Omega(1)}$.}

To put it differently, even an idealized transformer $T$ that can manipulate numbers with infinitely many bits cannot compute a function $f$ with large $\spvc(f)$ unless $T$ has a large embedding dimension, a large number of attention heads, or a large output MLP. Obviously, the same limitations apply to real-life Transformers that can only manipulate numbers using some finite (however large) number of bits. 


    \item \textbf{We obtain new theoretical limitations for tasks requiring complex reasoning}. We apply our new method to the task of function composition (\citet{peng2024limitations}) and the \match{3} (\citet{DBLP:conf/nips/SanfordHT23}) task, and show that previously known limitations for these tasks are also applicable to transformers that work with arbitrarily large (and even infinite) precision.
    
    We also do this for a new task, that we name \emph{binary relation composition} task. We introduce it because the function composition task can be solved by a 2-layer transformer, while for the binary relation composition task there is no apparent solution with any constant number of layers. Finally, we introduce another task that we call \emph{Quotient Binary Relation Composition} (see Section \ref{sec_separation}). Our method allows to establish limitations for this task even for mixed transformers that can use both the standard and the triangular attention.

    

    \item \textbf{We develop a more scalable higher-order mechanism.} 
    Previous works (e.g. \cite{DBLP:conf/nips/SanfordHT23}) have shown that higher-order methods can in principle overcome these limitations, but suffer from severe scalability issues. 
    In an attempt to address these difficulties, we present and study \emph{\name~attention}, a  variation devised to be sensitive to interactions between triplets of tokens without sacrificing too much efficiency. In contrast to previous similar attention mechanisms that required running time $n^3$ for inputs of length $n$, our mechanism enjoys $n^2$ space and sub-cubic running time, making it more scalable. At the same time, we show that 1-layer constant-size Strassen attention transformers are theoretically capable of solving all 4 aforementioned tasks.
    

\item \textbf{We empirically demonstrate the convergence of Strassen attention.} Is Strassen attention capable of converging to these theoretical solutions during learning? We empirically demonstrate that the answer is positive for all 4 tasks. For comparison, we have additionally evaluated all other attention mechanisms on these tasks. Our empirical findings agree with theory and show that Strassen attention: \emph{(i)} outperforms the accuracy of Standard attention, \emph{(ii)} achieves superior efficiency in terms of training time and computational resources when compared with other triple-wise interaction attention mechanisms. 


\end{itemize}



\paragraph{Paper organization} In Section \ref{sec_prel} we recall the Transformer architecture and introduce our new Strassen attention mechanism. Section \ref{sec_vc} presents our new lower bound method and its application to function composition, \match{3}, and  binary relation composition tasks. In Section \ref{sec_strassen}, we show that \name~attention can be implemented in sub-cubic time, and formally prove that, in principle, it is capable of solving the three aforementioned task. Section \ref{sec_separation} presents a novel task allowing to separate the capabilities of  \name~attention from those of standard and triangular attentions. Finally, section \ref{experiments_section} contains our experimental findings. 
\section{Preliminaries}
\label{sec_prel}
Throughout the paper, we denote $[n] = \{1, \ldots, n\}$ for $n\in\mathbb{N}$. For a set $\Sigma$, we will denote by $\Sigma^n$ the collection of sequences of elements of $\Sigma$ of length $n$, and by $\Sigma^*$ the collection of all finite sequences. We start by briefly recalling the basics of the Transformer architecture and formally defining the attention mechanisms studied in this paper.



The main block of the Transformer layer is the \emph{attention function}, formally defined as
a length-preserving function $a\colon(\mathbb{R}^d)^*\to (\mathbb{R}^d)^*$, where $d$ is the embedding dimension. In this paper, we consider 4 types of attention functions. 

\textit{Standard attention}~\cite{vaswani2017attention}, receives as input a sequence $x_i\in\mathbb{R}^d, i = 1, \ldots, n$ and outputs a sequence $a_i\in\mathbb{R}^d, i = 1, \ldots, n$, computed as follows:
\begin{align}
\label{eq_st1}
    a_i &= \sum_{j = 1}^n a_{ij} v_j\\
    a_{ij} &= \softmax_j(q_i k_j/\sqrt{d})\\
     \label{eq_st2}
    q_i &= W^q x_i, \qquad
    k_j = W^k x_j,\qquad  
    v_j = W^v x_j,
\end{align}
where $W^q, W^k, W^v\in\mathbb{R}^{d\times d}$

\textit{Triangular attention}~\cite{bergen2021systematic} is defined for $n = m^2$, with input tokens indexed by pairs $(i, j)$, $i, j = 1, \ldots, m$. Given an input $\{x_{ij}\in\mathbb{R}^d\}_{i,j = 1}^m$, the output is computed as follows:
\begin{align}
\label{eq_tr1}
    a_{ij} &= \sum_{\ell = 1}^m a_{i\ell j} v_{i\ell j}\\
    a_{i\ell j} &= \softmax_{\ell} (q_{i\ell} k_{\ell j}/\sqrt{d})\\
    q_{i\ell} &= W^q x_{i\ell}, \qquad k_{\ell j} = W^k x_{\ell j},\\
    \label{eq_tr2}
    v_{i\ell j} &= V_1 x_{i\ell}\odot V_2 x_{\ell j},
\end{align}
where $W^q, W^k, V_1, V_2\in\mathbb{R}^{d\times d}$.

\textit{Third-order attention}~\cite{DBLP:conf/nips/SanfordHT23} is computed as follows:
\begin{align}
\label{eq_to1}
    a_i &= \sum\limits_{j,\ell = 1}^n a_{ij\ell} (v_j\odot v_\ell)\\
    a_{ij\ell} &= \softmax_{j,\ell}(q_i ( k_{j} \odot k_{\ell}) /\sqrt{d})\\
    q_i &= W^q x_i, \qquad  k_{j} = W^k_1 x_j, \qquad k_\ell = W^k_2 x_\ell,\\
    \label{eq_to2}
    v_j &= V_1 x_j,\qquad v_\ell = V_2 x_\ell,
\end{align}
where $W^q, W^k_1, W^k_2, V_1, V_2\in\mathbb{R}^{d\times d}$.

We introduce \textbf{\name~attention}, computed as follows:
\begin{align}
\label{eq_str1}
    a_i &= \sum\limits_{j,k = 1}^n a_{ijk} (v_j\hadamard v_k)\\
    a_{ijk} &= \softmax_{j,k}((f_i g_j + g_j h_k + h_k f_i)/\sqrt{d})\\
    f_i &= W^f x_i, \qquad  g_j = W^g x_j, \qquad h_k = W^h x_k,\\
    \label{eq_str2}
    v_j &= V_1 x_j,\qquad v_k = V_2 x_k,
\end{align}
where $W^f, W^g, W^h, V_1, V_2\in\mathbb{R}^{d\times d}$, and $\hadamard$ denotes the Hadamard product.
See Figure \ref{fig:attention_mechanism}  for the illustration of these attention mechanisms.

\begin{figure*}[h!]
    \centering
    \includegraphics[width=.99\textwidth]{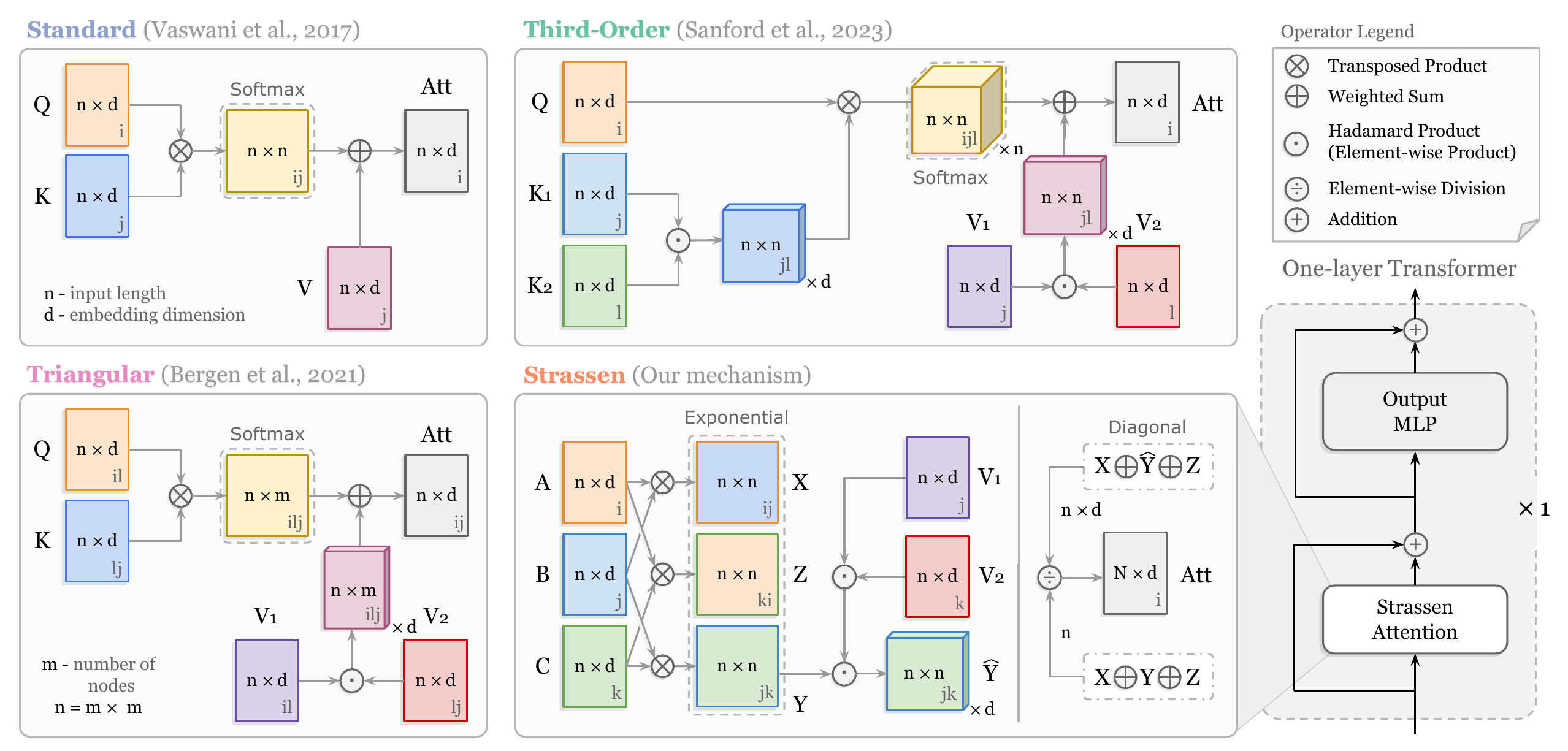}
    \caption{Comparison of Transformer attention mechanisms: \standardcolored~\cite{vaswani2017attention}, \thirdordercolored~\cite{DBLP:conf/nips/SanfordHT23}, \triangularcolored~\cite{bergen2021systematic}, and \namecolored~(proposed). The diagram shows operations and the one-layer Transformer architecture, highlighting the flow through each attention and output MLP (without layer normalization and dropout in the attention mechanism). Notation: input length $n$, embedding dimension $d$, number of nodes $m$, with \( n = m^2 \) when $n$ refers to the shape of flatten matrices.}
    \label{fig:attention_mechanism}
\end{figure*}

\begin{definition}
    A one-layer Transformer $T$ with $H$ heads and embedding dimension $d$ is given by $H$ attention functions $Att_1, \ldots, Att_H\colon(\mathbb{R}^d)^* \to(\mathbb{R}^d)^*$, a matrix $W_O\in\mathbb{R}^{d\times (d H)}$, and an ``output MLP'' $\mathcal{N}\colon\mathbb{R}^d\to\mathbb{R}$ with $P$ parameters, which is formally a neural network with ReLU activation.    
    We define the \textit{size} of $T$ as $size(T)=\max\{H,d,P\}$. The \emph{output} of $T$ on input $\bar x = (x_1, \ldots, x_n)\in(\mathbb{R}^d)^n$ is the sequence $\bar y = (y_1, \ldots, y_n)\in(\mathbb{R})^n$ given by
    \begin{align}
\label{eq_tr3}
    a_i^{(h)} &= (Att_h(\bar x))_i, \qquad h = 1, \ldots H\\
    \widehat{a}_i  &= W_O\begin{pmatrix}a_i^{(1)} \\ \vdots \\ a_i^{(H)}\end{pmatrix}\\
   y_i &= \mathcal{N}(x_i  +  \widehat{a}_i).
\end{align}

\end{definition}


So far, Transformers are defined as functions transforming sequences of vectors in $\mathbb{R}^d$. For the tasks we consider in this paper, we need to apply Transformers on sequences of symbols of an arbitrary finite alphabet $\Sigma$. This is done by including into  the Transformer a positional encoding $p\colon [n]\times \Sigma\to\mathbb{R}^d$. For a given $p$, an input word $w = \sigma_1\ldots\sigma_n\in \Sigma^n$ is converted into a sequence of vectors:
\[x_1 = p(1, \sigma_1), \ldots, x_n =\sigma(n, \sigma_n)\]
that constitute the input for the Transformer. For our lower bounds, we make no assumptions about the function $p$. In our upper bounds, however, we present constructions that use reasonable, easily computable positional encodings of the form $p(i, \sigma_i) = q(i) + r(\sigma_i)$, treating positions and symbols independently.

\section{Theoretical Limitations of Transformers via Split-VC dimension}
\label{sec_vc}

We now introduce the notion of splitting dimension for a Boolean function $f$. Let $\mathcal{X}$ be a set and $H\subseteq\{0, 1\}^\mathcal{X}$ be a collection of functions $h:\mathcal{X} \to \{0,1\}$ which we will refer to as \emph{hypothesis class}. We say that an hypothesis class $H$ \emph{shatters} a subset $X = \{x_1,\ldots, x_m\}\subseteq \mathcal{X}$ if for any Boolean vector $c_1\ldots c_m\in\{0, 1\}^m$ there exists $h\in H$ with $h(x_1) = c_1,\ldots, h(x_m) = c_m$. The maximal $m$ for which $H$ shatters some $X\subseteq \mathcal{X}$ of cardinality $m$ is called the \emph{VC dimension} of $H$ \cite{book-vc}. 

We now explain how to adapt VC dimension to use it as a complexity measure of a single function (instead of a class of functions). Take a function $f\colon \Sigma^n \to \{0,1\}$ and imagine we split the $n$ arguments of $f$ into two parts. One part will continue to correspond to the inputs,  but the other is to be regarded as a set of \emph{parameters}, so that now we can see $f$ as a class of functions that has a well-defined VC dimension. The complexity of $f$ will be defined as the maximal VC dimension that we can obtain in this way, considering all possible splittings into parameters and inputs.

Let us formalize this idea. For a set of positions $A\subseteq \{1, \ldots, n\}$, we define a Boolean matrix $M_f^A$ as follows.
Its rows (interpreted as inputs) will be indexed by all the words $w^1\in \Sigma^A$ and its columns (interpreted as parameters) by the words $w^2 \in \Sigma^{B}$, where $B=\{1, \ldots, n\}\setminus A$. Thus, $M_A^f$ will be a $|\Sigma|^{|A|}\times |\Sigma|^{n - |A|}$ Boolean matrix. The value of $M_A^f$ at $(w^1,w^2)$ is then defined as
\[
M_A^f(w^1,w^2)=f(w^1\oplus w^2)
\]
where $w^1\oplus w^2\in \Sigma^n$ is obtained by merging $w^1$ and $w^2$ according to the positions indicated by $A$ and $B$, i.e.
\[(w^1\oplus w^2)_i = \begin{cases}
    w^1_i & i \in A,\\
    w^2_i & i\in B,
\end{cases} \qquad i = 1, \ldots, n.\]


\begin{definition}
    We define the \emph{splitting VC dimension} of $f\colon \Sigma^n \to \{0,1\}$, denoted by $\spvc(f)$, as the maximum over $A\subseteq \{1, \ldots, n\}$ of the VC dimension of the set of columns of $M_f^A$, understood as Boolean functions on the set of rows.
\end{definition}

\paragraph{Example.} 
Consider a Boolean function $f\colon\{0, 1\}^4\to\{0, 1\}$, defined by $f(x_1, x_2, x_3, x_4) = (x_1\land x_2) \oplus (x_3 \land x_4)$. We claim that $\spvc(f) = 2$. To establish $\spvc(f) \ge2$, we consider $A = \{1, 3\}$. The matrix $M_f^A$ is constructed as follows: its rows are indexed by Boolean vectors $x_1x_3\in\{0, 1\}^2$, its columns by Boolean vectors $x_2 x_4\in\{0,1\}^2$, and the intersection of the row $x_1 x_3$ and column $x_2 x_4$ has $f(x_1,x_2,x_3,x_4)$ in it. Thus, this matrix looks as follows:
\begin{center}
\begin{tabular}{c|c|c|c|c|}
\diagbox{$x_1x_3$}{$x_2 x_4$}& 00 & 01 & 10 & 11\\
\hline
00& 0& 0& 0&0  \\
\hline
\rowcolor{Gainsboro!60}
01& 0& 1& 0& 1 \\
\hline
\rowcolor{Gainsboro!60}
10& 0& 0& 1&  1\\
\hline
11& 0& 1& 1&  0
\end{tabular}
\end{center}
Columns of this matrix realize all 4 Boolean functions on the second and the third row, implying that the VC dimension of the set of columns of this matrix is at least 2. We now observe that there is no $A\subseteq \{1, 2, 3, 4\}$ such that the set of columns of $M_f^A$ has VC dimension at least 3. Indeed, this requires $A$ to be of size at least 2, because otherwise there are less than 3 rows. But if $|A|\ge 2$, there are at most $4$ columns, making it impossible to realize $2^3 = 8$ different Boolean functions.


\subsection{Main Theorem}


In order to state our main Theorem, we first need to specify a way to see a Transformer as computing a given boolean function $f\colon \Sigma^n\to\{0,1\}$. We assume that an input word $w = \sigma_1\ldots\sigma_n$ is given to the Transformer using $n +1$ tokens. The first $n$ tokens are used to encode the $n$ symbols of $w$, while the $(n+1)$-st \emph{auxiliary token} (initially encoding the empty symbol) is used to encode the output $f(w)$ of the function being computed. More specifically, the output of the Transformer in the auxiliary token has to be a real number $y_{n+1}$, satisfying $f(w) = \sign(y_{n+1})$. When a Transformer $T$ fulfills this requirement for a given function $f$, we will say that the Transformer $T$ \textit{computes $f$ in an auxiliary token}. We can now state our main result. 

\begin{theorem}
    \label{thm_vc}
    Let $T$ be a one-layer standard-attention Transformer and let $f\colon\Sigma^n \to \{0, 1\}$ be a Boolean function. If $T$ computes $f$ in an auxiliary token, then $size(T)=\spvc(f)^{\Omega(1)}$.
\end{theorem}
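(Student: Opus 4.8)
The plan is to fix, once and for all, an arbitrary splitting $\{1,\dots,n\}=A\sqcup B$ and to show that the VC dimension of the set of columns of $M_f^A$ is at most $\mathrm{poly}(H,d,P)$, where $H$, $d$, $P$ are the number of heads, embedding dimension, and number of parameters of the output MLP of $T$. Since $\spvc(f)$ is the maximum of these VC dimensions over all choices of $A$, this gives $\spvc(f)=size(T)^{O(1)}$, equivalently $size(T)=\spvc(f)^{\Omega(1)}$, which is the claim.

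The structural key is that the query vector computed by every head $h$ at position $n+1$ is \emph{fixed}: it equals that head's query matrix applied to the embedding of the auxiliary token, which does not depend on the input word at all. Therefore, writing the un-normalised attention output of head $h$ at position $n+1$ as a sum over $j\in A$ plus a sum over $j\in B\cup\{n+1\}$, the first sum depends only on $w^1\in\Sigma^A$ while the second depends only on $w^2\in\Sigma^B$. I would then record, for each column $w^2$, a ``summary'' $\theta(w^2)\in\mathbb R^{H(d+1)}$ consisting, for each head $h$, of the vector $\sum_{j\in B\cup\{n+1\}}\exp(\langle q^{(h)}_{n+1},k^{(h)}_j\rangle/\sqrt d)\,v^{(h)}_j\in\mathbb R^d$ together with the strictly positive scalar $\sum_{j\in B\cup\{n+1\}}\exp(\langle q^{(h)}_{n+1},k^{(h)}_j\rangle/\sqrt d)$. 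The point is that the output $y_{n+1}$ of $T$, and hence $f(w^1\oplus w^2)=\sign(y_{n+1})$, depends on $w^2$ only through $\theta(w^2)$. Consequently the set of columns of $M_f^A$ (viewed as Boolean functions of the row $w^1$) is contained in the ``envelope'' class $\{\,h_\theta:\theta\in\mathbb R^{H(d+1)}\,\}$, where $h_\theta(w^1)=\sign\bigl(y_{n+1}(w^1,\theta)\bigr)$ and $y_{n+1}(w^1,\theta)$ denotes the output formula of $T$ with the summary $\theta$ substituted for the $B$‑part; by monotonicity of VC dimension it suffices to bound the VC dimension of this envelope class.

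To bound it, freeze an arbitrary tuple of rows $w^1_1,\dots,w^1_m\in\Sigma^A$ and examine, for each $i$, the map $\theta\mapsto\sign(y_{n+1}(w^1_i,\theta))$. With $w^1_i$ fixed, the $A$‑part of each head's numerator and denominator is a constant, so each coordinate of the post‑attention vector $x_{n+1}+\widehat a_{n+1}$ is a ratio of two polynomials in $\theta\in\mathbb R^{H(d+1)}$ of degree $O(H)$, whose denominator — a product of the positive scalars above — is strictly positive on the range of $\theta$. Feeding this into the output MLP $\mathcal N$, a ReLU network with at most $P$ parameters and hence piecewise affine with at most $2^{O(P)}$ affine pieces, each supported on a polytope cut out by at most $P$ affine inequalities in $\mathcal N$'s input, one obtains that $\sign(y_{n+1}(w^1_i,\theta))$ is a fixed Boolean combination of the signs of at most $2^{O(P)}$ polynomials in $\theta$, each of degree $O(H)$. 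Applying a standard sign‑pattern counting bound (Warren / Milnor--Thom, in the form used by Goldberg and Jerrum to bound VC dimensions of classes parametrised by real numbers) to the union over $i=1,\dots,m$ of all these polynomials, the number of distinct tuples $\bigl(\sign(y_{n+1}(w^1_i,\theta))\bigr)_{i=1}^m$ realised over $\theta\in\mathbb R^{H(d+1)}$ is at most $\bigl(m\cdot 2^{O(P)}\bigr)^{O(Hd)}$. If the envelope class shatters $w^1_1,\dots,w^1_m$ this count is at least $2^m$, so $m\le O\bigl(Hd\,(P+\log m)\bigr)$, which forces $m=\mathrm{poly}(H,d,P)=size(T)^{O(1)}$.

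The main obstacle I expect is the treatment of the output MLP: one must make the ``piecewise affine with boundedly many polytopal regions'' description of a ReLU network fully precise, and verify that substituting a low‑degree rational map for its input still leaves the output's sign governed by a controlled number of low‑degree polynomials in $\theta$ — in particular, checking that the exponential‑in‑$P$ bound on the number of regions is harmless, since it contributes only an additive $O(P)$ after taking logarithms. A secondary point requiring care is that every softmax normaliser is strictly positive (because the $j=n+1$ term is always present), so that clearing denominators in the rational expressions above never silently flips an inequality; this is exactly where the use of the auxiliary token, whose query is input‑independent, is essential, and also why the argument is specific to one‑layer transformers — with two layers the first‑layer output at a position in $B$ would already mix $w^1$ and $w^2$, and the clean factoring through $\theta(w^2)$ would break.
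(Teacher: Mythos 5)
Your proposal is correct and follows essentially the same route as the paper: you exploit the fact that the auxiliary token's query is input-independent to split each head's softmax numerator and denominator additively into an $A$-part and a $B$-part, replace the $B$-part by a real "summary" vector of dimension $O(Hd)$, and bound the VC dimension of the resulting real-parametrized class. The only difference is presentational — the paper cites Goldberg--Jerrum (Theorem 2.3) as a black box for the final VC bound, whereas you unpack the underlying Warren-style sign-pattern count explicitly.
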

\begin{proof}
Denote $m = \spvc(f)$. Let $A\subseteq \{1, \ldots, n\}$ be such that the VC dimension of the set of columns of $M_f^A$ is $m$. Denote $B = [n]\setminus A$. Assume for contradiction that there exists a one-layer standard-attention Transformer $T$ that computes $f$ and whose size (that is, embedding dimension, number of attention heads, and the size of the output MLP) is $m^{o(1)}$.


Consider any 
$w^1\in\Sigma^A, w^2\in \Sigma^B$  and define $w = w^1\oplus w^2 =\sigma_1\ldots\sigma_n\in \Sigma^n$. 
For $h = 1, \ldots, H$, observe that the output of the $h$-th attention head in the $(n+1)$-st token (the auxiliary one, where the output of the function is computed), can be written as:
\begin{equation}
\label{eq_long_frac}
    a_{n+1}^{(h)} = \frac{\alpha^{(h)}(w^1) + \beta^{(h)}(w^2) + \gamma^{(h)}}{\lambda^{(h)}(w^1) + \mu^{(h)}(w^2) + \nu^{(h)}},
\end{equation}
where
\begin{align*}
    \alpha^{(h)}(w^1) &= \sum\limits_{i\in A} e^{q_{n+1} k_i} v_i\in\mathbb{R}^d, \qquad \beta^{(h)}(w^2) = \sum\limits_{i\in B} e^{q_{n+1} k_i} v_i\in\mathbb{R}^d\\
    \lambda^{(h)}(w^1) &= \sum\limits_{i\in A} e^{q_{n+1} k_i} \in\mathbb{R}, \qquad \mu^{(h)}(w^2) = \sum\limits_{i\in B} e^{q_{n+1} k_i} \in\mathbb{R}\\
    \gamma^{(h)} &= e^{q_{n+ 1}k_{n+1}} v_{n+1}, \qquad \nu^{(h)} = e^{q_{n+ 1}k_{n+1}}
\end{align*}
Note that that $k_i = W^q p(i, \sigma_i), v_i = W^v p(i, \sigma_i)$ are functions of $w^1$ for $i\in A$ and of $w^2$ for $i\in B$, whereas $q_{n+1}, k_{n+1}$, and $v_{n+1}$ are fixed.

The output of the function is thus computed by:
\begin{equation}
\label{eq_f}
f(w) = M_f^A(w^1, w^2) = \sign\left(\mathcal{N}\left(x_{n+ 1} + W_O\begin{pmatrix}
   \frac{\alpha^{(1)}(w^1) + \beta^{(1)}(w^2) + \gamma^{(1)}}{\lambda^{(1)}(w^1) + \mu^{(1)}(w^2) + \nu^{(1)}}\\
    \vdots\\
    \frac{\alpha^{(H)}(w^1) + \beta^{(H)}(w^2) + \gamma^{(H)}}{\lambda^{(H)}(w^1) + \mu^{(H)}(w^2) + \nu^{(H)}}\end{pmatrix}\right)\right),
    \end{equation}
where $x_{n+1} = p(\varnothing), W_O\in\mathbb{R}^{d\times dH}$ are fixed, and $\mathcal{N}$ is the output MLP of $T$.

Consider now arbitrary real vectors
\begin{align*}
    \alpha &= (\alpha^{(1)}, \ldots, \alpha^{(H)})\in\mathbb{R}^{dH},\qquad \beta = (\beta^{(1)}, \ldots, \beta^{(H)})\in\mathbb{R}^{dH} \\
     \lambda &= (\lambda^{(1)}, \ldots, \lambda^{(H)})\in\mathbb{R}^{H},\qquad \mu = (\mu^{(1)}, \ldots, \mu^{(H)})\in\mathbb{R}^{H} 
\end{align*}
and define a function $F\colon(\alpha, \lambda, \beta, \mu)\mapsto\{0,1\}$ as in \eqref{eq_f}, but with vectors $\alpha, \lambda, \beta, \mu$ allowed to take arbitrary values:
\begin{equation}
\label{eq_F}
F(\alpha,\lambda,\beta,\mu) = \sign\left(\mathcal{N}\left(x_{n+ 1} + W_O\begin{pmatrix}
   \frac{\alpha^{(1)} + \beta^{(1)} + \gamma^{(1)}}{\lambda^{(1)} + \mu^{(1)} + \nu^{(1)}}\\
    \vdots\\
    \frac{\alpha^{(H)} + \beta^{(H)} + \gamma^{(H)}}{\lambda^{(H)} + \mu^{(H)} + \nu^{(H)}}\end{pmatrix}\right)\right),
    \end{equation}
Let $H$ be a class, defined by \eqref{eq_F} when $\alpha, \lambda$ are considered as inputs to hypotheses and $\beta,\mu$ as parameters:
\[H = \{h_{\beta,\mu} \colon\mathbb{R}^{dH + H}\to\{0, 1\} : h_{\beta,\mu}(\alpha,\lambda) = F(\alpha, \lambda, \beta,\mu), \,\, (\beta,\mu)\in\mathbb{R}^{dH + H}\}.\]

On the one hand, the VC dimension of $H$ is at least $m = \spvc(f)$. Indeed,  consider $H$ is an infinite matrix, with rows indexed by $(\alpha, \lambda)\in\mathbb{R}^{dH + H}$, columns by $(\beta, \mu)\in\mathbb{R}^{dH + H}$, and the intersection of the $(\alpha, \lambda)$-row and $(\beta,\mu)$-column containing $F(\alpha, \lambda, \beta,\mu) = h_{\beta,\mu}(\alpha,\lambda)$. The VC dimension of $H$ is the VC dimension of the columns of this matrix. Since by \eqref{eq_f} this matrix has $M_f^A$ as a sub-matrix, we get the required lower bound.

On the other hand, the VC dimension of $H$ can be upper bounded by $m^{o(1)}$. Indeed, the number of parameters of $H$ is $dH + H = m^{o(1)}$. To compute the value of a given hypothesis on a given input, it is enough to do $m^{o(1)}$ basic arithmetic operations and comparisons with $0$, because the size of $\mathcal{N}$ is $m^{o(1)}$. By Theorem 2.3 in~\cite{goldberg1995bounding}, the VC dimension is polynomial in these quantities, which gives us $m^{o(1)}$ upper bound in our case. 
\end{proof}

\begin{remark}
Note that from Theorem \ref{thm_vc} it follows that for any Transformer $T$ satisfying $size(T)=n^{o(1)}$, it is impossible to compute in an auxiliary token any function $f\colon \Sigma^n\to\{0,1\}$ with $\spvc(f)=n^{\Omega(1)}$. 
\end{remark}


\subsection{Applications to three concrete tasks}\label{tasks}

We now apply Theorem \ref{thm_vc} to three different tasks.

\subsubsection{Function Composition}

Introduced in \citet{peng2024limitations}, in this task we receive a description of two functions $g:[n]\to [n]$ and $h:[n]\to [n]$, and  we are asked the value of $h(g(x))$ for a given $x\in [n]$. The task is presented to a Transformer in the form of a sequence of $2n + 1$ tokens, divided in three parts. The first two parts encode the values of $g$ and $h$, and the third part encodes $x$ in a single token, where the output has to be computed.  More specifically, the output has to be a real number $y_{2n+1}$ satisfying $|y_{2n+1}-h(g(x))|<0.5$. That is, with $h(g(x))$ being the closest integer to $y_{2n +1}$.

\begin{theorem}
    Let $T$ be any one-layer standard-attention Transformer with $size(T)=n^{o(1)}$. Then $T$ cannot solve the function composition task. 
\end{theorem}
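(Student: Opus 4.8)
The plan is to reduce the function composition task to the computation of some Boolean function $f$ in an auxiliary token with $\spvc(f) = n^{\Omega(1)}$, so that Theorem \ref{thm_vc} applies directly. First I would observe that a Transformer solving function composition can be turned into one computing an associated Boolean function: fix the input $x \in [n]$ to be a constant (say $x = 1$), so that the task becomes ``output $h(g(1))$''; then compose the output MLP with a thresholding/bit-extraction gadget so that the Transformer outputs a single designated bit (e.g.\ the least significant bit) of $h(g(1))$ in the auxiliary token. This only inflates $size(T)$ by a constant factor, so it suffices to lower-bound $\spvc$ of the resulting function $f(g, h) = \mathrm{bit}(h(g(1)))$, where $g$ and $h$ are encoded over the $2n$ non-auxiliary positions.

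Next I would choose the splitting $A$ that witnesses a large VC dimension. The natural choice is to let $A$ be the positions encoding $h$ (the ``inputs'') and $B$ the positions encoding $g$ (the ``parameters''), or vice versa --- one direction should make the shattering transparent. Concretely, I would restrict attention to $g$'s of a special form: $g$ maps $1 \mapsto z$ for a chosen $z \in [n]$ and is irrelevant elsewhere, so the ``parameter'' effectively selects an index $z$; meanwhile the ``input'' is the function $h$, which is an arbitrary map $[n] \to [n]$, and in particular $\mathrm{bit}(h(z))$ can be any Boolean vector in $z$ as $h$ ranges over all functions. To shatter a set of $m = \Theta(n)$ ``input'' functions $h_1, \ldots, h_m$, I would pick $m$ distinct indices $z_1, \ldots, z_m$ and design each $h_j$ so that the map $z_k \mapsto \mathrm{bit}(h_j(z_k))$ realizes the $j$-th column of the $m \times m$ identity-like pattern --- more precisely, for any target bit pattern I want to realize via the ``parameter'' $z_k$, I set $h_j(z_k)$ to an integer whose designated bit is the $(j,k)$ entry, which is possible since $h_j$ is a free function and the $z_k$ are distinct. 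This shows the columns of $M_f^A$ shatter $\{h_1, \ldots, h_m\}$, giving $\spvc(f) \geq m = n^{\Omega(1)}$.

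With $\spvc(f) = n^{\Omega(1)}$ established, Theorem \ref{thm_vc} forces $size(T) = \spvc(f)^{\Omega(1)} = n^{\Omega(1)}$, contradicting the hypothesis $size(T) = n^{o(1)}$; this completes the proof. I expect the main obstacle to be purely bookkeeping: making sure the reduction from ``outputs an integer close to $h(g(x))$'' to ``outputs the sign of a Boolean function'' genuinely costs only a constant (or $n^{o(1)}$) blow-up in $size(T)$ --- in particular that extracting a single bit of an integer in the range $[n]$ and a rounding step can be done by an MLP of size $n^{o(1)}$ (indeed $O(\log n)$ suffices, or one can arrange the task so the answer is already Boolean) --- and that the chosen splitting respects the token structure (auxiliary token in $A$ or treated consistently). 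A minor subtlety is handling the positional encoding: since Theorem \ref{thm_vc} makes no assumption on $p$, the argument goes through for any encoding, which is exactly what we need. Everything else is the combinatorial design of the $h_j$'s, which is elementary once the right splitting is fixed.
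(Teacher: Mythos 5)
Your high-level strategy is the paper's: fix $x=1$, convert the transformer into one computing a Boolean function in the auxiliary token at $O(1)$ extra MLP cost, and then exhibit a splitting with VC dimension $n^{\Omega(1)}$ so that Theorem \ref{thm_vc} applies. The reduction step is fine. But the concrete splitting you commit to is the wrong way around, and the shattering argument as written does not work. Recall that $\spvc$ is the VC dimension of the \emph{columns} (parameters, indexed by $\Sigma^B$) viewed as Boolean functions on the \emph{rows} (inputs, indexed by $\Sigma^A$): the rows are what gets shattered, and the columns are the hypotheses. You take the rows to be the functions $h$ and the columns to be the selector $z=g(1)$. Since after fixing $x=1$ the function depends on $g$ only through $g(1)$, there are at most $n$ distinct columns, so the column class can realize at most $n$ distinct Boolean functions on any set of rows, giving VC dimension at most $\log_2 n$ — and $(\log n)^{\Omega(1)}$ does not contradict $size(T)=n^{o(1)}$. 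Moreover, your shattering construction redesigns the $h_j$ (the rows) for each target bit pattern, but the shattered set must be fixed in advance; only the column may vary with the pattern.

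The fix is exactly the transposed splitting, which is what the paper uses: take $A$ to be the single position encoding $g(1)$, so the rows are the $n$ possible values $p_1=g(1)\in[n]$, and let the columns range over all of $h\in[n]^n$. Then for any target pattern $c_1\ldots c_n\in\{0,1\}^n$ one picks $h(i)=1$ if $c_i=1$ and $h(i)=2$ otherwise, so all $n$ rows are shattered and $\spvc\ge n$ (the paper phrases this as the lemma $\spvc(Ind_n)\ge n$, where $Ind_n(p_1,q_1,\ldots,q_n)=[q_{p_1}=1]$, with the Boolean conversion done by a single $\mathrm{ReLU}(1.5-y)$ rather than a bit-extraction gadget). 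You did hedge with ``or vice versa,'' and the correct direction is indeed the vice versa; as written, though, the key combinatorial step fails.
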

\begin{proof}
We show that any Transformer that solves this task can be converted into a Transformer computing in the auxiliary token the following Boolean function:
\begin{align*}
Ind_n&\colon [n]^{n+1}\to\{0,1\},\\
Ind_n(p_1, q_1, \ldots, q_n) &=  \begin{cases}
                                1 & q_{p_1} = 1,\\
                                0 & \text{otherwise,}
                                \end{cases}
\end{align*}
                                and that this transformation requires adding just $O(1)$ neurons to the  output MLP.
                                Indeed, by fixing $x = 1$ and setting $g(1) = p_1, h(1) = q_1,\ldots, h(n) = q_n$, we obtain that the token with $x$ outputs a real number $y$ with $|y - h(g(1))| = |y - q_{p_1}|< 0.5$. It now remains to change the output to $\mathrm{ReLU}(1.5 - y)$ which will be positive exactly when $q_{p_1} = 1$.

The result now follows from Theorem \ref{thm_vc} and the following:

\begin{lemma}
    $\spvc(Ind_n) \ge n$.
\end{lemma}
\begin{proof}
    We claim that the VC dimension of the set of columns of $ M = M_{Ind_{n}}^A$ with $A=\{1\}$, is at least $n$. Rows of this matrix are indexed by $p_1\in[n]$, and columns by vectors $q_1\ldots q_n\in [n]^n$. We claim that the set of all $n$ rows of $M$ is shattered by the columns of $M$. Take any Boolean vector $b = c_1\ldots c_{n}\in\{0, 1\}^{n}$. We need to find $q_1 \ldots, q_n\in[n]^n$ such that
    $Ind_n(i, q_1, \ldots, q_n) = c_{i}$ 
    for all $i \in [n]$.  It is suffices to define $q_i = \begin{cases}
        1 & c_{i} = 1,\\
        2 & c_{i} = 0.
    \end{cases}$
\end{proof}
\end{proof}

\subsubsection{The Match3 task} \label{match3}
Next, we define the \match{$_3[n,m]$} task~\cite{DBLP:conf/nips/SanfordHT23}. It is a sequence-to-sequence task. The input is presented to the Transformer as a sequence of $n$ tokens, encoding a vector of integers $(p_1, \ldots, p_n)\in[m-1]^n$. 
The output is a vector $(y_1, \ldots, y_n)\in\mathbb{R}^n$, required to satisfy:
\[\sign(y_i) = \begin{cases} 1& \exists j,k\in[n] \text{ s.t. }\\
&  p_i + p_j + p_k  = 0 \pmod{m}\\
0 & \text{otherwise}
\end{cases}\]
Note that we deliberately exclude the value $p_i = 0$ for the input positions. This is to avoid inputs that make the task trivial. Indeed, if $p_i=0$ for some $i$, then the output is trivially 1 since we always have $p_i + p_i + p_i = 0$.

\begin{theorem}
\label{thm_mathc3}
    Let $T$ be any one-layer standard-attention Transformer with $size(T)=n^{o(1)}$. Then $T$ cannot solve the \match{$_3[n,m]$} task for $m = 2n - 2$.
\end{theorem}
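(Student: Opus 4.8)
The plan is to reuse the reduction strategy from the function composition proof: I would show that any one-layer standard-attention $T$ solving $\match{$_3[n,2n-2]$}$ can be converted, without changing its parameters (only its positional encoding), into a Transformer that computes in an auxiliary token a Boolean function $g$ with $\spvc(g)=\Omega(n)$, and then apply Theorem~\ref{thm_vc}. To obtain $g$, freeze the last input coordinate to a fixed constant $c\in[m-1]$ and treat the remaining $n-1$ coordinates as the argument of $g$; token $n$ then serves as the auxiliary token, its ``empty'' symbol mapped to the positional vector $T$ assigns to position $n$ with value $c$ (legitimate, since the lower bound assumes nothing about positional encodings). Writing $p_n:=c$ and $t^*:=-c \bmod m$, the $\match{3}$ guarantee directly gives
\[
\sign(y_n)=g(p_1,\dots,p_{n-1})=\Bigl[\ \exists\, j,k\in[n]:\ p_j+p_k\equiv t^* \pmod m\ \Bigr],
\]
so $T$ computes $g$ in an auxiliary token with no extra MLP neurons, and Theorem~\ref{thm_vc} yields $size(T)\ge\spvc(g)^{\Omega(1)}$.

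It then remains to show $\spvc(g)=\Omega(n)$. I take the split $A=\{1\}$, so rows of $M_g^A$ are indexed by $p_1\in[m-1]$ and columns by parameter tuples $(p_2,\dots,p_{n-1})$, with $B=\{2,\dots,n-1\}$. The crucial choice is to make $c$ odd; since $m=2n-2$ is even, $t^*$ is odd, so $2p_1\equiv t^*$ is unsolvable and no two odd residues (even equal ones) sum to $t^*$ modulo $m$. Let $X\subseteq[m-1]$ be the set of odd residues, of size $\approx n$, and delete the at most two boundary residues causing degeneracies to get $X'$ with $|X'|=\Omega(n)$. For an arbitrary $S\subseteq X'$, choose the parameter tokens so that $\{\,(t^*-p_j) \bmod m : j\in B\,\}=S$, padding any unused positions (and all of them, if $S=\varnothing$) with the value $c$. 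For a row $p_1\in X'$: a witnessing pair avoiding position $1$ is ruled out by the padding/parity design; a pair $(1,k)$ forces $p_1\equiv t^*-p_k$, which is impossible for $k=1$ (parity), impossible for $k=n$ (an even residue, not in $X'$), and for $k\in B$ holds exactly when $p_1\in S$. Hence $M_g^A(p_1,\cdot)=[p_1\in S]$ on $X'$, so the columns of $M_g^A$ shatter $X'$ and $\spvc(g)\ge|X'|=\Omega(n)$.

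Combining the two parts, $size(T)\ge\spvc(g)^{\Omega(1)}=n^{\Omega(1)}$, contradicting $size(T)=n^{o(1)}$; hence no such $T$ solves $\match{$_3[n,2n-2]$}$. The main obstacle is the combinatorial engineering in the second paragraph: making one $\match{3}$ output act as a clean set-membership predicate requires simultaneously eliminating every accidental triple $p_j+p_k+p_n\equiv 0$ --- those using two parameter tokens, those using the frozen token, and the $j=k$ case. The parity trick handles all of these at once, and the choice $m=2n-2=\Theta(n)$ is a sweet spot: large enough that $X$ has $\Omega(n)$ elements, yet small enough that only $n-2$ parameter tokens are needed to realize every subset of $X'$ as a set of pairwise-sum targets. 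A much larger modulus would break realizability, a much smaller one would shrink the shattered set below $n^{\Omega(1)}$.
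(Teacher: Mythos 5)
Your proposal is correct and follows the paper's overall strategy --- freeze one token's value, read the \match{3} output there as a Boolean function computed in an auxiliary token, lower-bound its split-VC dimension, and invoke Theorem~\ref{thm_vc} --- but the combinatorial core differs. The paper fixes $p_1 = 1$, forbids the value $m-2$ in the other positions to kill the cross-terms involving the frozen token, reduces to the function $Sum_2[\ell, 2\ell]$ on $\ell = n-1$ positions, and proves $\spvc \ge \ell/2$ via a half/half split of those positions, shattering $\ell/2$ hand-crafted rows $p^i = (1,\ldots,1,2i,1,\ldots,1)$. You instead freeze $p_n = c$ with $c$ odd and exploit parity: since $m = 2n-2$ is even and $t^* = -c \bmod m$ is odd, no two tokens of equal parity can form a witnessing pair, which simultaneously eliminates the $j=k$ case, pairs of padding tokens, pairs of $S$-realizing tokens, and pairs involving the frozen token. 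This lets you take the singleton split $A=\{1\}$ and shatter $\Omega(n)$ rows indexed by single odd residues, in the style of the paper's $Ind_n$ lemma rather than its $Sum_2$ lemma. Both routes give $\spvc = \Omega(n)$ and hence the same conclusion; yours yields a slightly larger shattered set ($n-O(1)$ versus $(n-1)/2$) and an arguably cleaner case analysis, at the cost of the parity bookkeeping. One cosmetic imprecision: when you pad unused positions with $c$, the set $\{(t^*-p_j)\bmod m : j\in B\}$ equals $S$ together with the even residue $(t^*-c)\bmod m$, not $S$ exactly; this is harmless because that extra residue is even and therefore never matches an odd row in $X'$, and your subsequent case analysis already accounts for it.
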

\begin{proof}
We fix the value of $p_1$ to be equal to  1, and consider the output of \match{$_3[n,m]$} at the first position. If we additionally set $p_j\neq m - 2$ for $j\ge 2$, we obtain  $p_1 + p_j + p_k = 0\pmod{m}$ if and only if $j,k\ge 2$ and $p_j + p_k = -1 \pmod{m}$. Hence, a Transformer for the \match{$_3[n,m]$} task can be converted into a Transformer, computing the following function in the auxiliary token:
\begin{align*}
  Sum_2[\ell,m]&\colon([m-1]\setminus\{m-2\})^\ell\to\{0, 1\}\\ Sum_2[\ell,m](p) := &\begin{cases}
    1 & \exists j, k\in [\ell] \text{ s.t. } p_j + p_k = -1\pmod{m},\\
0 & \text{otherwise}
\end{cases}
\end{align*}
where $\ell = n - 1$.

The following Lemma establishes what we need in order to obtain the desired lower bound from Theorem \ref{thm_vc}. 

\begin{lemma}
    \label{prop_sum}
    For even $\ell$, it holds that
    $
    \spvc(Sum_2[\ell, 2\ell]) \ge \ell/2.
    $
\end{lemma}
\begin{proof}
    We claim that the VC dimension of the set of columns of $ M = M_{Sum_2[\ell, 2\ell]}^A$ with $A=\{1, \ldots, \ell/2\}$, is at least $\ell/2$. The rows of this matrix are indexed by vectors $p = p_1\ldots p_{\ell/2}\in ([2\ell - 1]\setminus\{2\ell - 2\})^{\ell/2}$, and columns by vectors $q = q_{\frac{\ell}{2} +1} \ldots q_\ell\in ([2\ell - 1]\setminus\{2\ell - 2\})^{\ell/2}$. Note that in this case, for a given row $p$ and column $q$, their merging $p\oplus q$ is simply their concatenation. 
 
 We now consider $\ell/2$ rows, corresponding to vectors:
 \begin{align*}
     p^1 &= (2, 1, 1, \ldots,1), \\
     p^2 &= (1, 4, 1, \ldots, 1), \\
     &\vdots\\
     p^{\ell/2} &= (1, 1, 1, \ldots, \ell).
 \end{align*}   
 and show that these rows can be shattered by the columns of $M_{Sum_2[\ell, 2\ell]}^A$. For any Boolean vector $c_1\ldots c_{\ell/2}\in\{0,1\}^{\ell/2}$, we have to find a column $q\in([2\ell - 1]\setminus\{2\ell - 2\})^{\ell/2}$ such that:
 \[Sum_2[\ell,2\ell](p^i\oplus q) = c_i\]
 for all $i = 1, \ldots, \ell/2$. It then suffices to choose $q$ such that 
 \[
 q_{\frac{\ell}{2} + i} =  \begin{cases}
          2\ell-2i -1 & c_i = 1,\\
          1 & \text{otherwise}.
        \end{cases}
        \qquad i\in [\ell/2].
 \]
 Indeed, first note that the value $m - 2 = \ell - 2$ is not used. Now, if $c_i = 1$, then $p_i\oplus q$ has numbers $2i$ and $2\ell - 2i - 1$, summing up to $2\ell - 1$. Next, if $c_i = 0$, in $p_i\oplus q$ only two numbers can appear, $1$ and $2i \ell$, whose sum is neither $2\ell - 1$ nor $4\ell - 1$ because $i\le  \ell/2$. This completes the proof. 
\end{proof}
\end{proof}

\subsubsection{Binary Relation Composition}

Finally, we define the binary relation composition task. This is a sequence-to-sequence task, where on input we get two Boolean matrices $A, B\in \{0, 1\}^{\sqrt{n}\times\sqrt{n}}$. The input is presented to a Transformer using $n$ tokens, indexed by pairs $(i,j)\in[\sqrt{n}]^2$, with the $(i,j)$-th token receiving an encoding of $A_{ij}$ and $B_{ij}$.
In the output, we have to compute the matrix of the ``composition'' of $A$ and $B$:
\[B\circ A\in\{0, 1\}^{\sqrt{n}\times \sqrt{n}},\qquad(B\circ A)_{ij} =\bigvee\limits_{k = 1}^{\sqrt{n}} (A_{ik}\land B_{kj}).\]
More precisely, the output of the $(i,j)$-th token for $(i,j)\in [\sqrt{n}]^2$ has to be a real number $y_{ij}$ with $\sign(y_{ij}) = (B\circ A)_{ij}$.
We refer to $A$ and $B$ as ``relations'' on the set $[\sqrt{n}]$, with $A_{ij}$ indicating whether the pair $(i,j)$ is in the relation $A$ and $B_{ij}$ doing so for relation $B$. For an example, imagine that $A=B$ encodes a relation for a group of researchers where two researchers $i$ and $j$ are related if they have a paper in co-authorship. Then the composition $B\circ A$ refers to the relation of ``having a common co-author''.

\begin{theorem}
\label{thm_bin_lowerbound}
    Let $T$ be any one-layer standard-attention Transformer with $size(T)=n^{o(1)}$. Then $T$ cannot solve the binary relation composition task. 
\end{theorem}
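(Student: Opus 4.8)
The plan is to reduce the binary relation composition task to computing, in an auxiliary token, a Boolean function with polynomially large split-VC dimension, and then invoke Theorem \ref{thm_vc}. Concretely, I would fix attention on a single output coordinate, say the $(1,1)$-entry of $B \circ A$, and observe that $(B \circ A)_{11} = \bigvee_{k=1}^{\sqrt n} (A_{1k} \land B_{k1})$. This is exactly an inner-product / disjointness-type predicate between the vector $(A_{11},\ldots,A_{1\sqrt n})$ and the vector $(B_{11},\ldots,B_{\sqrt n 1})$. So a Transformer solving the task can be converted, by adding $O(1)$ neurons to the output MLP to turn the $\sign$ condition into the required form, into one computing in an auxiliary token the function $\mathrm{NonDisj}_{\sqrt n}(u, v) = \bigvee_{k} (u_k \land v_k)$ on inputs $u, v \in \{0,1\}^{\sqrt n}$ (all other input tokens can be set to a fixed dummy value, e.g.\ all zeros, which does not affect this particular output coordinate).

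The key remaining step is the lemma $\spvc(\mathrm{NonDisj}_{\sqrt n}) \ge \sqrt n$ (hence $= n^{\Omega(1)}$). For this I would take $A$ to be the set of positions encoding $u$, and $B$ the set encoding $v$, so that $M_f^A$ has rows indexed by $u \in \{0,1\}^{\sqrt n}$ and columns by $v \in \{0,1\}^{\sqrt n}$, with entry $\bigvee_k (u_k \land v_k)$. I claim the $\sqrt n$ rows given by the standard basis vectors $e_1, \ldots, e_{\sqrt n}$ are shattered by the columns: for any target pattern $c_1 \ldots c_{\sqrt n} \in \{0,1\}^{\sqrt n}$, simply pick the column $v = c$ itself, since $\mathrm{NonDisj}(e_i, c) = \bigvee_k ((e_i)_k \land c_k) = c_i$. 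This is exactly the observation that the identity matrix is realized among the columns, which immediately gives VC dimension $\ge \sqrt n$.

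Combining the reduction with the lemma and Theorem \ref{thm_vc}: if a one-layer standard-attention Transformer $T$ solved the binary relation composition task, then after the $O(1)$-neuron modification it would compute $\mathrm{NonDisj}_{\sqrt n}$ in an auxiliary token, forcing $size(T) \ge \spvc(\mathrm{NonDisj}_{\sqrt n})^{\Omega(1)} \ge (\sqrt n)^{\Omega(1)} = n^{\Omega(1)}$, contradicting $size(T) = n^{o(1)}$.

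I do not expect a serious obstacle here; the structure mirrors the Function Composition and Match3 proofs almost verbatim. The only points needing minor care are: (i) checking that the input format (tokens indexed by pairs $(i,j)$, each carrying both $A_{ij}$ and $B_{ij}$) genuinely lets us route the relevant bits into a single output token while freezing everything else — this is fine because fixing the off-row/off-column entries to $0$ leaves $(B\circ A)_{11}$ untouched; and (ii) handling the auxiliary-token bookkeeping, i.e.\ that ``computing $f$ in an auxiliary token'' in the sense of Theorem \ref{thm_vc} matches the task's native output convention after the ReLU reshaping, which is the same $O(1)$-overhead argument used in the Function Composition proof.
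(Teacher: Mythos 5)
Your proposal is correct and follows essentially the same route as the paper: restrict to instances where only the first row of $A$ and the first column of $B$ are free, read off the set-intersection predicate (the paper's $\disj_m$) at the output token $(1,1)$, and shatter the standard-basis rows by taking the column equal to the target pattern. The one detail the paper handles that you gloss over is that $A_{11}$ and $B_{11}$ are carried by the \emph{same} token $(1,1)$ (which must also remain fixed, since it serves as the output/auxiliary token), so they cannot be placed on opposite sides of the split; the paper therefore drops index $1$ and works with $m=\sqrt{n}-1$ instead of $\sqrt{n}$, which is immaterial for the $n^{\Omega(1)}$ conclusion.
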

\begin{proof}
We consider a subproblem of this task, where only elements $A_{12}, \ldots, A_{1k}$ and $B_{21}, \ldots, B_{k1}$ can be equal to 1, where $k = \sqrt{n}$. Under this restriction, a Transformer solving the binary relation composition computes, in the token at position $(1, 1)$, the function $\disj_m \colon\{0,1\}^{m}\times\{0,1\}^{m}\to\{0, 1\}$ given by

\[\disj_m(a,b) = \bigvee_{k = 1}^{m} (a_i \land b_i),\]
with $a$ and $b$ being written in positions $A_{12}, \ldots, A_{1k}$ and $B_{21}, \ldots, B_{k1}$, respectively. In our case, $m=k-1=\sqrt{n}-1$.
The results now follows from Theorem \ref{thm_vc} and the following lemma. 
\begin{lemma}
\label{prop_disj}
    $\spvc(\disj_m) \ge m$.
\end{lemma}
\begin{proof}
    We show that the VC dimension of the set of columns of $M_{\disj_m}^A$ is at least $m$ for $A = \{1, \ldots, m\}$. Both the rows and the columns of $M_{\disj_m}^A$ are indexed by $m$-bit vectors. We show that $m$ rows, corresponding to the following vectors:
\begin{align*}
    a^1 &= (1,0,\ldots, 0, 0),\\
    a^2 &= (0, 1, \ldots, 0, 0),\\
    &\vdots\\
    a^m &= (0, 0, \ldots, 0, 1)
\end{align*}
can be shattered by the columns of the matrix. To establish that, for every $c\in\{0, 1\}^m$ we have to provide $b = b_1\ldots b_m\in\{0, 1\}^m$ with
\[\disj_{m}(a^i, b)  = c_i, \qquad i\in[m].\]
This can be achieved by simply setting $b_i = c_i$ for $i\in [m]$.
\end{proof}
\end{proof}

\section{\name~attention -- An efficient mechanism to solve complex tasks}
\label{sec_strassen}
Both the third-order mechanism of \citet{DBLP:conf/nips/SanfordHT23}~and the \name~mechanism define attention as the interaction between three tokens (i.e., a triplet). The crucial difference is that \name~attention  is computed using pairwise dot-products of vectors in the triplet,  while the third-order involves coordinates products of all 3 vectors. This allows us to decompose \name~attention scores in a way that reduces the whole layer to the product of a constant number of $n\times n$ matrices. Famously, the $n\times n$ matrix product admits an $O(n^\omega)$-time algorithm for $w< 3$, with currently best known upper bound on $w$ being $2.372$~\cite{duan2023faster}.
\begin{theorem}
        \name~attention layer can be implemented in $O(n^\omega \cdot d)$-time, where $n$ is the number of input tokens, $d$ is the embedding dimension, and $\omega$ is the matrix multiplication exponent, i.e, the smallest real number such that the $n\times n$ matrix product admits an $O(n^\omega)$-time algorithm.
        \end{theorem}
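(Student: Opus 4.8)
The plan is to exploit the particular algebraic shape of the \name~attention scores. Writing $s_{ijk} = (f_i g_j + g_j h_k + h_k f_i)/\sqrt d$ for the pre-softmax score of the triplet $(i,j,k)$, the first step is the observation that the exponential factorizes across the three ``edges'' of the triplet:
\[
e^{s_{ijk}} = e^{f_i g_j/\sqrt d}\cdot e^{g_j h_k/\sqrt d}\cdot e^{f_i h_k/\sqrt d} = P_{ij}\,Q_{jk}\,R_{ik},
\]
where $P,Q,R\in\mathbb{R}^{n\times n}$ are the matrices with entries $P_{ij}=e^{f_i g_j/\sqrt d}$, $Q_{jk}=e^{g_j h_k/\sqrt d}$, $R_{ik}=e^{f_i h_k/\sqrt d}$. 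This is exactly the property separating \name~attention from the third-order mechanism: there the score is a genuine trilinear form $q_i(k_j\odot k_\ell)$ that does not split, whereas here it is a \emph{sum} of bilinear forms, so its exponential is a \emph{product}. Building $f_i,g_j,h_k$ and then all entries of $P,Q,R$ costs $O(n^2 d)$ (plus $O(nd^2)$ for the linear projections), which is absorbed into the target bound since $\omega\ge 2$ and $d=O(n)$.

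Second, I would reduce the softmax normalizers to matrix products. For a fixed $i$,
\[
Z_i \;=\; \sum_{j,k}P_{ij}Q_{jk}R_{ik} \;=\; \sum_j P_{ij}\Bigl(\sum_k R_{ik}Q_{jk}\Bigr),
\]
and as $i,j$ range the inner sum is precisely the entry $(RQ^\top)_{ij}$. Hence the vector $(Z_1,\dots,Z_n)$ consists of the row sums of the Hadamard product $P\odot(RQ^\top)$: one $n\times n$ matrix multiplication, $O(n^\omega)$, followed by $O(n^2)$ elementwise work.

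Third, the numerator is handled one output coordinate at a time. Fixing $c\in[d]$ and setting $u_j=(V_1 x_j)_c$, $w_k=(V_2 x_k)_c$, the $c$-th coordinate of the attention output at token $i$ equals $N_i^{(c)}/Z_i$ with
\[
N_i^{(c)} = \sum_{j,k}P_{ij}Q_{jk}R_{ik}\,u_j w_k = \sum_j (P_{ij}u_j)\Bigl(\sum_k (R_{ik}w_k)Q_{jk}\Bigr).
\]
Writing $\widetilde R = R\,\mathrm{diag}(w)$ and $\widetilde P = P\,\mathrm{diag}(u)$, this is again a vector of row sums of a Hadamard product, $\widetilde P\odot(\widetilde R Q^\top)$: two column-scalings ($O(n^2)$), one matrix multiplication ($O(n^\omega)$), and $O(n^2)$ elementwise work. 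Repeating over all $d$ coordinates and dividing by $Z_i$ assembles the whole layer in $O(n^\omega d)$ time.

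I expect the only real content to be the first step; everything afterwards is the standard ``triangle contraction'' — the triple sum $\sum_{j,k}P_{ij}Q_{jk}R_{ik}(\cdots)$ collapses to a constant number of square matrix products by eliminating one summation index at a time. The remaining points are minor bookkeeping: that the $j$- and $k$-value vectors use different projections $V_1,V_2$ (harmless, since the coordinate-wise scalars $u_j,w_k$ are just computed from the two projections), and that the auxiliary projection and exponentiation costs stay dominated by $O(n^\omega d)$, which holds whenever $d=O(n)$.
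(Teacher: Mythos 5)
Your proposal is correct and follows essentially the same route as the paper: the key step in both is that $e^{(f_ig_j+g_jh_k+h_kf_i)/\sqrt d}$ factors into three terms each depending on only two indices, after which the triple sum collapses into $n\times n$ matrix products (the paper writes this as diagonal entries of $XYZ$ and $X\widehat{Y}Z$ with a value-weighted tensor $\widehat Y$, which is the same computation as your per-coordinate Hadamard-product-and-row-sums formulation). No gap; the bookkeeping about projection and exponentiation costs is fine.
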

        \begin{proof}
        Writing $a_i$ in (\ref{eq_str1}--\ref{eq_str2}) by definition, we get:
        \begin{equation}
        \label{eq_str_a_i}
        a_i = \frac{\sum\limits_{j,k} e^{(f_i g_j + g_j h_k + h_k f_i)/{\sqrt{d}}} (v_j \odot v_k) }{\sum\limits_{j,k} e^{(f_i g_j + g_j h_k + h_k f_i)/{\sqrt{d}}}}.
        \end{equation}
        Defining matrices $X, Y, Z \in\mathbb{R}^{n\times n}$ and $\widehat{Y}\in\mathbb{R}^{n\times n \times d}$ by:
        \[X_{ij} = e^{f_i g_j/\sqrt{d}},\,\, Y_{j,k} = e^{g_j h_k/\sqrt{d}}, \,\,Z_{k,i} = e^{h_k f_i/\sqrt{d}},\]
        \[\widehat{Y}_{j,k} = e^{g_j h_k/\sqrt{d}} v_j \odot v_k,\]
        we get an expression for $a_i$ in terms of their matrix products 
        $a_i = \frac{(X \widehat{Y} Z)_{ii}}{(X Y Z)_{ii}}$.
        \end{proof}



On top of exhibiting a faster runnin-time, we now show that, in contrast to standard attention, \name~attention allows a one-layer Transformer to solve all the 3 tasks described in Section \ref{tasks}.

\begin{theorem} 
\label{thm_strassen_power}
The function composition, the binary relation composition, and the \match{$_3[n,poly(n)]$} tasks can be solved by a one-layer constant-size \name~attention Transformer.
\end{theorem}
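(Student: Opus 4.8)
The plan is to exhibit, for each of the three tasks, an explicit choice of parameters for a one-layer constant-size \name~attention Transformer (with a reasonable positional encoding of the form $p(i,\sigma_i)=q(i)+r(\sigma_i)$) and to verify it computes the task. The unifying idea is that \name~attention, by summing over pairs $(j,k)$ with a score $f_ig_j+g_jh_k+h_kf_i$, can be made to ``search'' over all triplets $(i,j,k)$ and concentrate the softmax mass on exactly the triplets that witness the property in question; the output MLP then reads off a sign.

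For \match{$_3[n,\mathrm{poly}(n)]$}: at position $i$ with value $p_i$, I would use trigonometric-style positional/value encodings so that $f_i,g_j,h_k$ encode $p_i,p_j,p_k$ in a way that $f_ig_j+g_jh_k+h_kf_i$ is (up to scaling) maximized precisely when $p_i+p_j+p_k\equiv 0 \pmod m$ — e.g. placing the relevant quantities on a circle so that the pairwise-dot-product sum detects the three-way sum, exactly as \citet{DBLP:conf/nips/SanfordHT23} do for third-order attention but now routed through the pairwise decomposition. Scaling the logits by a large constant makes the softmax collapse onto witnessing triplets (if any exist), so the aggregated $v_j\odot v_k$ value is bounded away from the ``no-witness'' case, which an $O(1)$-size ReLU MLP separates by sign. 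For function composition: the input has $2n+1$ tokens encoding $g$, $h$, and the query $x$; at the query token I would set $f$ so it matches (via the softmax) the index $j$ with $g(j)=\cdot$ — actually I would have the query select the pair $(j,k)$ where $j$ is the ``$g$-slot for $x$'' and $k$ is the ``$h$-slot for $g(x)$'', using the middle index to chain the two lookups, so the retrieved value vector encodes $h(g(x))$; a constant-size MLP then rounds. For binary relation composition, at token $(i,j)$ I would let the pair-search range over $k\in[\sqrt n]$ by having $f_{(i,j)}$, $g_{(i,k)}$, $h_{(k,j)}$ encode the row/column indices so the score is large exactly on consistent triples $(i,k,j)$, and load $A_{ik}$, $B_{kj}$ into $v$'s so that $v\odot v$ carries $A_{ik}\land B_{kj}$; summing and thresholding yields $\bigvee_k (A_{ik}\land B_{kj})$.

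The key technical device throughout is the ``hard attention via temperature'' trick: to turn an approximate argmax into an exact indicator, one shows that the gap between the best logit and the second-best is at least some fixed constant (depending only on the encoding geometry, not on $n$), so multiplying all logits by a large enough constant $c$ drives the softmax weight outside the witness set to $o(1)$, uniformly in $n$; then the aggregated value vector lands in one of two well-separated regions and the output MLP (of constant size) outputs the correct sign. I also need to be careful that the ``no-witness'' case (all triplets bad) produces an aggregate value that is still well-separated — here one typically reserves an extra coordinate or uses the auxiliary/self token as a default, so the softmax always has a controlled fallback.

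The main obstacle I expect is the function composition construction: unlike \match 3 and binary relation composition, where the relevant three-way condition is symmetric and ``local'', function composition requires a genuine two-step pointer chase ($x\mapsto g(x)\mapsto h(g(x))$), and it is not a priori obvious that a single \name~attention head — whose score is the cyclic-symmetric form $f_ig_j+g_jh_k+h_kf_i$ — can enforce the two coupled equality constraints ``$j$ is the $g$-entry indexed by $x$'' and ``$k$ is the $h$-entry indexed by the value sitting at $j$'' simultaneously. The resolution should be to encode the chaining index at position $j$ so that $g_j$ simultaneously carries ``I am the $g$-slot for $x$'' (to be matched against $f_i$ at the query token) and ``my stored value is $g(x)$'' (to be matched against $h_k$ at the correct $h$-slot), making the middle term $g_jh_k$ do the second lookup; verifying that the three bilinear terms can be independently tuned to realize both constraints without interference, and that the resulting logit gap is constant, is the crux of the argument. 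The remaining steps — writing down the encodings, bounding the softmax tails, and designing the constant-size MLPs — are then routine.
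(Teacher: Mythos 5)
Your overall strategy is the same as the paper's: choose encodings so that the Strassen score $f_ig_j+g_jh_k+h_kf_i$ is maximized exactly on the witnessing triples, scale the logits so the softmax collapses onto the maximizer, read the answer out of $v_j\odot v_k$, and finish with a constant-size MLP. Your analysis of function composition is on target: the paper resolves the two-step pointer chase precisely as you predict, by letting the $f_ig_j$ term enforce ``$j$ is the $g$-slot indexed by $x$'' (it evaluates to $-n^2(\phi(i)-j)^2$) and the $g_jh_k$ term enforce ``$k$ is the $h$-slot indexed by $\phi(j)=g(x)$'' (it evaluates to $-n^2(\phi(j)-(k-n))^2$), with the $h_kf_i$ term contributing nothing. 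Your binary relation composition sketch likewise matches the paper's construction.

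There is, however, a concrete gap in your Match3 plan. You propose to reuse the trigonometric encodings of Sanford et al.\ ``routed through the pairwise decomposition,'' but their device detects $p_i+p_j+p_k\equiv 0\pmod m$ via $\cos(\theta_i+\theta_j+\theta_k)$, which expands into products of \emph{three} coordinates---exactly the degree-3 multilinear interaction that third-order attention provides and that the Strassen form deliberately lacks. The paper instead exploits a polynomial identity that \emph{is} pairwise-decomposable: setting $q_i=p_i-\Sigma/3$, the quantity $-(q_i+q_j+q_k)^2=-(q_i^2+q_j^2+q_k^2)-2(q_iq_j+q_jq_kq+q_kq_i)$ splits into three pairwise terms once a constant-$1$ coordinate is reserved to absorb the squares, and since $p_i+p_j+p_k\in\{3,\dots,3m-3\}$ the congruence reduces to equality with $\Sigma\in\{m,2m\}$, handled by two heads. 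You would need to replace your trigonometric step with something of this kind. A secondary omission: the paper adds small tie-breaking terms (e.g.\ $(in+j)/n^3$) to force a \emph{unique} argmax, then has the MLP verify $p_i+p_j+p_k=\Sigma$ for the single retrieved pair; your plan of letting the softmax average over all witnesses and separating the aggregate by sign needs an argument that averaging many $v_j\odot v_k$ vectors cannot land in the wrong region, which you have not supplied.
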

\begin{proof}
We consider these three tasks separately.

\paragraph{Function composition}
 In the function composition task, we get a $(2n + 1)$-length sequence of numbers
    \[\phi(1), \ldots, \phi(2n + 1)\in [n].\]
The task is to output, in the $(2n+1)$-st token, the value of $h(g(x))$ with the 0.5-precision (in fact, we will do this with a much better precision, namely $e^{-\Omega(n^2)}$), where  $g, h\colon [n]\to[n]$ and $x\in [n]$ are such that $g(1) = \phi(1), \ldots, g(n) = \phi(n), h(1) = \phi(n + 1), \ldots, h(n) = \phi(2n), x = \phi(2n +1)$.

We use the following positional encoding:
    \[x_i = \begin{pmatrix} i
    \\ i^2 \\ \phi(i) \\ (\phi(i))^2 \\ 1 \\ 0 \\  0
    \end{pmatrix}, \qquad i = 1, \ldots, 2n + 1.\]
      We take matrices $W^f, W^g, W^h$ in (\ref{eq_str1}--\ref{eq_str2}) so that:
      \[f_{i} = n\begin{pmatrix}
        (\phi(i))^2 \\ 2\phi(i) \\ -1 \\ 0 \\ 0 \\0 
      \end{pmatrix}, \qquad g_j = n\begin{pmatrix}
          -1 \\ j\\j^2 \\ (\phi(j))^2 \\ 2\phi(j) \\ -1
      \end{pmatrix},\]
      \[h_k = n\begin{pmatrix}
          0 \\ 0\\0 \\ -1 \\ k - n \\ k^2 - 2kn + n^2
      \end{pmatrix}\]
      We obtain:
      \begin{align*}
      a_{i,j,k} &=\softmax_{j,k} \frac{f_i g_j + g_j h_k + h_k f_i}{\sqrt{6}} \\
      &= \softmax_{j,k} \frac{-n^2 \left[(\phi(i) - j)^2 - (\phi(j) - (k - n))^2\right]}{\sqrt{6}}
      \end{align*}
      In particular, the maximum of $a_{2n+1, j, k}$ is for $j$ and $k$ such that $j = \phi(2n+ 1) = x$, and $k= n + \phi(j) = n +  \phi(x)=  n + g(x)$, and other values  of $a_{2n+1,j,k}$ are  by  an $e^{\Omega(n^2)}$-factor smaller. Hence, with precision $\pm e^{-\Omega(n^2)}$, we obtain $a_{2n+1} = v_j \hadamard v_k$ for $j = x$ and $k = n + g(x)$. Observe that $\phi(k) = \phi(n + g(x)) = h(g(x))$, so it is enough for $v_j\hadamard v_k$ to have a coordinate equal to $\phi(k)$. We can achieve this by setting matrices $V_1, V_2$ in \eqref{eq_str2} such that the first coordinates of $v_j$ and $v_k$ are 1 and $\phi(k)$, respectively.

\paragraph{Binary relation composition}
In this task, the length of input is a square number $n = m^2$, and tokens are indexed by pairs $(i,j)\in [m]^2$. The token, indexed by $(i, j)$, receives on input two bits $A_{i j},B_{i j}$ from two Boolean matrices $A,B\in\{0, 1\}^{m\times m}$. As an output, the $(i,j)$-th token has to produce a real number $y_{ij}$ such that 
\[(B\circ A)_{ij} = \bigvee\limits_{k = 1}^m (A_{ik} \land B_{kj}) = \sign(y_{ij}).\]

We employ the following positional encoding:
\[x_{ij} = \begin{pmatrix}
    A_{i j}
    \\
    B_{i j}\\
    i\\
    i^2\\
    j
    \\
    j^2\\
    1
\end{pmatrix}, \qquad i,j = 1, \ldots, m.\]

  We then take matrices $W^f, W^g, W^h$ in (\ref{eq_str1}--\ref{eq_str2}) so that:
   \[f_{ij} = n^2\left(\begin{array}{c} i^2 \\ 2i \\ -1  \\\hline j^2 \\ 2j  \\ -1 \\\hline0 \\0 \\0  \\\hline 0 \\ 0 \\\hline 1/m^2 \\ 1/m^3 \\ 1/m^4 \\ 1/m^5\end{array}\right), \qquad
   g_{cd} = n^2 \left(\begin{array}{c}
      -1 \\ c \\ c^2 \\\hline 0 \\ 0  \\ 0 \\\hline d^2 \\2d \\-1 \\ \hline 1\\ A_{cd} \\\hline c \\ d \\ 0 \\0 
  \end{array}\right), \qquad h_{k\ell} = n^2\left(\begin{array}{c}
      0 \\ 0 \\ 0 \\\hline -1 \\ \ell  \\ \ell^2 \\\hline-1 \\k \\k^2\\ \hline B_{k\ell}\\ 1 \\\hline 0 \\ 0 \\ k \\\ell 
  \end{array}\right)
   \]
(horizontal lines are added for readability). We obtain:
\begin{equation}
    \label{eq_hell}
    f_{ij} g_{cd} + g_{cd} h_{k\ell} + h_{k\ell} f_{ij} =  n^4\left[ -(i - c)^2 - (d - k)^2 - (\ell - j)^2 + A_{cd} + B_{k\ell} + \frac{c m^3 + d m^2 + k m + \ell}{m^5}\right].
\end{equation}
The expression in brackets has the ``integral part'', and also has the term $\frac{c m^3 + d m^2 + k m + \ell}{m^5}$ which is $O(1/m)$ and is different for different quadruples $(c,d,k,\ell)$. Hence, there is a unique quadruple $(c, d, k,\ell)$, establishing the maximum of the above expression, and it also maximizes the ``integral part''  $\left[-(i - c)^2 - (d - k)^2 - (\ell - j)^2 + A_{cd} + B_{k\ell}\right]$. Because of the factor $n^4$, the value for the other quadruples will be smaller by at least $\Omega(n^4 /m^5) = \Omega(n^{3/2})$. 

The quantity  $\left[-(i - c)^2 - (d - k)^2 - (\ell - j)^2 + A_{cd} + B_{k\ell}\right]$ is at most $2$, being equal to 2 if and only if $c = i, \ell = j, d = k$ and $A_{id} = B_{dj} = 1$. In other words, the maximum of the integral part is 2 if and only if $(B\circ A)_{ij} = 1$.

As a result,  the $(i,j)$-th token will get the value of the Hadamard product $v_{cd}\hadamard v_{k\ell}$ with precision $e^{-\Omega(n^{3/2})}$ for some quadruple $c,d,k,\ell\in [m]$ satisfying:
\begin{equation}
    \label{eq_eq}
    \left(c = i, \ell = j, d = k \text{ and } A_{cd} = B_{k\ell} = 1\right) \iff (B\circ A)_{ij} = 1
\end{equation}
It remains to define matrices $V^1, V^2$ so that this Hadamard product in some coordinates has $c, d, k,\ell, A_{cd}, B_{k\ell}$, and has 0 where $x_{ij}$ has $i$ and $j$. Then $x_{ij} + (v_j\odot v_k)$ will have all quantities involved in the equalities of the left-hand side of \eqref{eq_eq}, and checking them can be done with a constant-size MLP.

\paragraph{Match3} On input, we get an array $p_1\ldots p_n\in[m-1]^n$. We first describe how to check, for a fixed $\Sigma$ and for every $i = 1, \ldots, n$, if there exist $j,k\in [n]$ such that $p_i + p_j + p_k = \Sigma$ using one \name~attention head. We get a solution for the \match$_3[n,m]$ task with 2 attention heads by applying this construction to $\Sigma = m$ and $\Sigma = 2m$. 

The embedding dimension will be $8$. Define $q_i = p_i - \Sigma/3$. We use the following positional encoding:
\[x_i =\begin{pmatrix}
    i \\ q_i \\ q_i^2 \\ 1 \\ 0 \\0 \\0 \\0 
\end{pmatrix}, \qquad i = 1, \ldots, n. \]
            We define matrices $W^f, W^g, W^h$ in (\ref{eq_str1}--\ref{eq_str2}) so that:
            \[f_i = n^2\begin{pmatrix}
                -q_i  \\ -q_i\\ 0 \\-q_i^2 \\ 0 \\0 \\ 0 \\ 0
            \end{pmatrix},\, g_j = n^2\begin{pmatrix}
                2q_j  \\ 0\\ -q_j \\1 \\ -q_j^2 \\1 \\ \frac{i}{n^2} \\ 1
            \end{pmatrix},\, h_k = n^2\begin{pmatrix}
                0  \\ 2q_k\\ 2q_k \\0 \\ 1 \\-q_k^2 \\ 1 \\ \frac{k}{n^3}
            \end{pmatrix}.\] 
         As a result, we get:
         \begin{align*}
         a_{ijk} &= \softmax_{j,k}\frac{f_i g_j + g_j h_k + h_j f_i}{\sqrt{8}}\\
          &= \softmax_{j,k}\frac{n^4\left(-(q_i + q_j + q_k)^2 + \frac{in +  j}{n^3}\right)}{\sqrt{8}}\\
         &= \softmax_{j,k}\frac{n^4\left(-(p_i + p_j + p_k-\Sigma)^2 + \frac{in +  j}{n^3}\right)}{\sqrt{8}}
         \end{align*}
         For a given $i$, 
        the maximum of $a_{ijk}$ is attained on  a single triple $(i, j, k)$ with the minimal value of $|p_i + p_j + p_k - \Sigma|$ across the array, and it will be by an   $e^{\Omega(n)}$-factor larger than any other value of $a_{i,j,k}$. We added the fraction $\frac{in + j}{n^3}$ to ensure uniqueness of the maximum;
      the added term is different for different pairs $(i, j)$ while  not exceeding $O(1/n)$.

        Since all numbers under consideration are polynomial in $n$, the output $a_i$ will be equal to $v_j \hadamard v_k$ for the maximal pair $(j, k)$ up to   $\exp\{-\Omega(n)\}$-precision. In the output MLP, we have to check if $p_i + p_j + p_k = \Sigma$ for this pair $(j, k)$.  It is enough to define $V_1$, $V_2$ so that the 5th and the 6th coordinate of $v_j$ and $v_k$ are $1, p_j$ and $p_k, 1$, respectively. As a result, the 2nd, the 5h, and the 6th coordinates of $x_i + a_i$ will be $-p_i$, $p_k$, and $p_j$, respectively, allowing us to find out if $p_i + p_j + p_k = \Sigma$ with a constant-size output MLP.
\end{proof}


\section{Disentangling \name~from Standard and Triangular attentions} \label{sec_separation}

So far, we have evaluated tasks that are challenging for one-layer standard attention Transformers but (in principle) easy for one-layer \name~attention Transformers. In this section, we extend our analysis to triangular attention. As a reminder, running the triangular attention mechanism on a general sequence of length $n$, requires the creation of $n^2$ tokens. In this regime, the triangular attention running time becomes $n^3$
. However, when the input is already structured as a $\sqrt{n}\times \sqrt{n}$ matrix, one can run the triangular attention on it directly, making the running time $O(n^{3/2})$. One such task example using structured input is the binary relation composition task. In this case, a one-layer triangular attention can perform this task with one attention head, constant embedding dimension and constant-size output MLP. 

We devise a variant of the binary relation task that allows us to disentangle the performance of \name~attention with that of the triangular and standard attentions, namely the \emph{quotient binary relation composition task}. The latter takes as inputs two Boolean matrices $A, B\in\{0, 1\}^{m\times m}$ and a ``coloring'' function $\col\colon [m]\to[m]$, where $m = \sqrt{n}$. There are $n = m^2$ input tokens, indexed by pairs from $[m]^2$, with the $(i, j)$-th token receiving $A_{i, j}, B_{i,j}$ and $(\col(i), \col(j))$ as inputs. The quotient of the composition $B\circ A$ by $\col$ is a Boolean matrix $B\circ A/\col\in\{0, 1\}^{m\times m}$, defined by:
\[(B\circ A/\col)_{ij} = \begin{cases}
    1 & \exists k_1, k_2\in [m] \text{ s.t. } A_{ik_1} = B_{k_2j} = 1, \\
    & \col(k_1) = \col(k_2), \text{ and } k_1 \neq k_2,\\
    0 & \text{otherwise}.
\end{cases}\]
The task is to output, for all $(i, j)\in [m]^2$, a real number $y_{ij}$ such that $(B\circ A/\col)_{ij} = \sign(y_{ij})$.

To illustrate an instance of this task, imagine that $A = B $ encodes the graph of co-authorship between a set of researchers, and $c$ assigns each researcher its university. Then we have $(B\circ A/\col)_{ij} = 1$ if and only if researchers $i$ and $j$ have co-authors from the same university, with the condition that these co-authors must be different people.

\begin{theorem}
    \label{thm_fancy}
    The quotient binary relation composition task is solvable by a one-layer \name-attention constant-size transformer. At the same time, this task cannot be solved by any one-layer Transformer $T$ with $n^{o(1)}$ standard-attention heads, $n^{o(1)}$ triangular-attention heads, $n^{o(1)}$
 embedding dimension, and $n^{o(1)}$-size output MLP.  
    \end{theorem}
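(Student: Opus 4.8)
I will prove the two halves of the statement separately; both follow templates already established in the paper.

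\emph{Solvability by \name~attention.} First I would adapt the binary‑relation‑composition construction from the proof of Theorem~\ref{thm_strassen_power}. Use a positional encoding of the form $p(\text{position},\sigma)=q(\text{position})+r(\sigma)$ that equips the token at grid position $(i,j)$ with the numbers $A_{ij},B_{ij},\col(i),\col(j)$, their squares $\col(i)^2,\col(j)^2$, the indices $i,j,i^2,j^2$, and a constant $1$. Then I would pick $W^f,W^g,W^h$ — via a block decomposition of the embedding space exactly as in the proof of Theorem~\ref{thm_strassen_power}, so that each of the dot products $f\!\cdot\!g$, $g\!\cdot\!h$, $h\!\cdot\!f$ controls a disjoint group of coordinates — so that for the query token $(a,b)$ and a pair of tokens $(c,d),(e,\ell)$ the attention logit equals
\[
\tfrac{n^{C}}{\sqrt d}\Big[-(a-c)^2-(b-\ell)^2-(\col(d)-\col(e))^2+A_{cd}+B_{e\ell}+\tfrac{(d-e)^2}{m^4}+\tfrac{\tau(c,d,e,\ell)}{m^{10}}\Big],
\]
where $\tau(c,d,e,\ell)\in\{0,\dots,m^4-1\}$ is the base‑$m$ encoding of $(c,d,e,\ell)$ and $C$ is a large constant. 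The integer part $-(a-c)^2-(b-\ell)^2-(\col(d)-\col(e))^2+A_{cd}+B_{e\ell}$ is at most $2$, with equality exactly for the \emph{good} quadruples, those with $c=a,\ \ell=b,\ \col(d)=\col(e)$ and $A_{ad}=B_{eb}=1$; among good quadruples the term $(d-e)^2/m^4$ strictly prefers $d\neq e$, and $\tau(\cdot)/m^{10}$ makes the whole bracket an injective function of $(c,d,e,\ell)$. Hence the softmax concentrates (up to $e^{-\Omega(n)}$, by the same precision estimate as in Theorem~\ref{thm_strassen_power}) on a single pair, which is a good quadruple with $d\neq e$ iff $(B\circ A/\col)_{ab}=1$. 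Finally I would choose $V_1,V_2$ so that $v_{(c,d)}\hadamard v_{(e,\ell)}$ carries $c,\ell,\col(d),\col(e),A_{cd},B_{e\ell},d,e$, and a constant‑size ReLU output MLP would check, on the selected pair, the predicate $c=a\wedge\ell=b\wedge\col(d)=\col(e)\wedge A_{cd}=1\wedge B_{e\ell}=1\wedge d\neq e$, which holds iff $(B\circ A/\col)_{ab}=1$.

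\emph{Hardness for mixed standard $+$ triangular transformers.} The plan is to extend Theorem~\ref{thm_vc} so that it tolerates triangular heads. The key observation is that triangular attention also decomposes along a suitable split. Take $n=m^2$ and let the output token be the one at grid position $(1,1)$. For a triangular head one has $a_{11}=\big(\sum_{\ell}e^{q_{1\ell}k_{\ell 1}/\sqrt d}\,(V_1x_{1\ell}\hadamard V_2x_{\ell 1})\big)\big/\big(\sum_{\ell}e^{q_{1\ell}k_{\ell 1}/\sqrt d}\big)$, and the $\ell$‑th summand of both numerator and denominator depends only on the unordered pair of tokens $\{(1,\ell),(\ell,1)\}$. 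Thus, for any partition $A\sqcup B$ of the token positions that keeps each such pair on one side, the head's output at $(1,1)$ has precisely the shape $\big(\alpha^{(h)}(w^1)+\beta^{(h)}(w^2)+\gamma^{(h)}\big)/\big(\lambda^{(h)}(w^1)+\mu^{(h)}(w^2)+\nu^{(h)}\big)$ of \eqref{eq_long_frac}. Consequently the entire argument of Theorem~\ref{thm_vc} — including the Goldberg--Jerrum bound~\cite{goldberg1995bounding} on the VC dimension of the induced hypothesis class — goes through with $H$ now counting both standard and triangular heads, as long as we restrict to such ``pair‑respecting'' splits.

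It remains to exhibit, for the quotient binary relation composition task, a restriction and a pair‑respecting split whose split‑VC dimension is $n^{\Omega(1)}$. Fix $m=\sqrt n$ even, the coloring $\col(i)=\lceil i/2\rceil$ (so the color classes are the pairs $\{2s-1,2s\}$), and restrict the instance so that the only entries allowed to be $1$ are $A_{12},\dots,A_{1m}$ and $B_{21},\dots,B_{m1}$, with $A_{11}=B_{11}=0$. Under this restriction the output at $(1,1)$ equals $\bigvee_{s=2}^{m/2}\big[(A_{1,2s-1}\wedge B_{2s,1})\vee(A_{1,2s}\wedge B_{2s-1,1})\big]$. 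Choose the split so that $A$ contains the pairs $\{(1,\ell),(\ell,1)\}$ for odd $\ell$ (placing all fixed/interior tokens arbitrarily, which is harmless since the triangular heads at $(1,1)$ never read them and the standard heads only pick up fixed constants from them); this split is pair‑respecting, and then $A$ controls $A_{1,2s-1},B_{2s-1,1}$ while $B$ controls $A_{1,2s},B_{2s,1}$, for $s=2,\dots,m/2$. Exactly as in Lemma~\ref{prop_disj}, the $m/2-1$ rows obtained by setting a single $A_{1,2s-1}$ to $1$ are shattered by the columns obtained by setting $\{B_{2s,1}:s\in S'\}$ to $1$ for each target set $S'$. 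Hence $\spvc$ of the task function is at least $m/2-1=n^{\Omega(1)}$, and combining this with the extended Theorem~\ref{thm_vc} completes the proof.

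The main difficulty is conceptual and lies on the hardness side: recognizing that triangular attention, for all its ``three‑token'' flavor, still falls within reach of the split‑VC method provided one places the output token on the diagonal and uses a split invariant under the transposition $(1,\ell)\leftrightarrow(\ell,1)$; once that is in place, turning the quotient into a disjointness‑like function mirrors Lemma~\ref{prop_disj}. On the upper‑bound side the only real subtlety is the tie‑breaker encoding the $k_1\neq k_2$ condition: the indicator $[d=e]$ cannot be written as a bounded‑dimension bilinear form, so it must be enforced by the tiny term $(d-e)^2/m^4$, whose scale has to be tuned so that it reorders the good quadruples as desired without ever letting a non‑good quadruple win.
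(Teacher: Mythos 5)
Your proposal is correct and follows essentially the same route as the paper's proof: the upper bound modifies the binary-relation-composition construction by swapping $-(d-k)^2$ for $-(\col(d)-\col(k))^2$ and layering a small tie-breaker enforcing $k_1\neq k_2$ plus a tiny uniqueness term, and the lower bound rests on the same key observation that a triangular head's output at a suitably placed token decomposes summand-by-summand across a split that keeps each pair $(x_{1\ell},x_{\ell\cdot})$ on one side, reducing to a disjointness-style shattering argument combined with the Goldberg--Jerrum bound. The only differences are cosmetic gadget choices (output token on the diagonal and a consecutive-pair coloring, versus the paper's token $(1,2)$ and offset-pair coloring), which do not change the argument.
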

    \begin{proof}
    We start with the upper bound.

 \paragraph{Upper bound}
        The upper bound is very similar to our construction for the binary relation composition task in Theorem \ref{thm_strassen_power}. Namely, first we replace $-(d - k)^2$ by $-(\col(d) -  \col(k))^2$ in \eqref{eq_hell}. 
        After this modification, the maximum of \eqref{eq_hell} will be attained on a single quadruple  $(c, d, k, \ell)$, and for this quadruple we will have $c = i, \ell = j$, $\col(d) = \col(k)$ and $A_{i d} = B_{kj} = 1$ if and only if a quadruple, satisfying these equalities, exists. However, due to the definition of our task, we have to add  a smaller term, enforcing that among quadruples, satisfying this property, those with $d\neq k$ have a larger value of \eqref{eq_hell}. This can be achieved by adding a term of the form $(d - k)^2/m^3$, which is always $O(1/m)$ so that the largest possible difference in this term is smaller than the smallest possible difference of the ``integral part'' in \eqref{eq_hell}.
        
       We also have to add a term which ensures that the maxima is attained at the unique quadruple. The largest possible difference in this term should be smaller than the smallest possible difference in the previous terms, which is $\Omega(1/m^3)$. We can again take the expression $c m^3 + dm^2 + k m + \ell$ but divided by a larger denominator, for instance:
       \[\frac{c m^3 + dm^2 + k m + \ell}{m^8}\]
       (now the maximal possible difference in this term is $O(1/m^4)$). Finally, it remains to multiply all the coefficients by  a sufficiently large factor to make the minimal possible difference between the maximum and the other values polynomial in $n$. 

\paragraph{Lower bound} We employ the same technique as in  Theorem \ref{thm_vc}. However, we cannot rely on it directly as now we have to  deal with the triangular attention.

    Assume for contradiction that there exists a one-layer Transformer $T$ such that (a) it solves the quotient binary relation composition task; (b) it has   $n^{o(1)}$ standard-attention heads,  $n^{o(1)}$ triangular-attention heads, $n^{o(1)}$ embedding dimension,  and $n^{o(1)}$-size output MLP. 
    Without loss of generality, let $n$ be even and fix $s$ be such that $n = 2s + 2$. Given two binary words $p = p_1\ldots p_s, q = q_1\ldots q_s\in\{0, 1\}^s$, we define an instance $(A(p), B(q), \col)$ of the quotient binary relation composition task by setting:
    \[A_{1,2+j} = p_j, \qquad B_{2 + s + j,2} = q_j,\]
    for $j \in [s]$, and letting all the other entries of the matrices $A, B$  to be $0$. The coloring function is defined by $\col(1) = \col(2) = 1$ and
    \[\col(2+j) = \col(2 + s + j) = 2 + j\]
    for $j \in [s]$. An example of this construction for $s = 4$ is given in Figure \ref{fig:fancy}.
    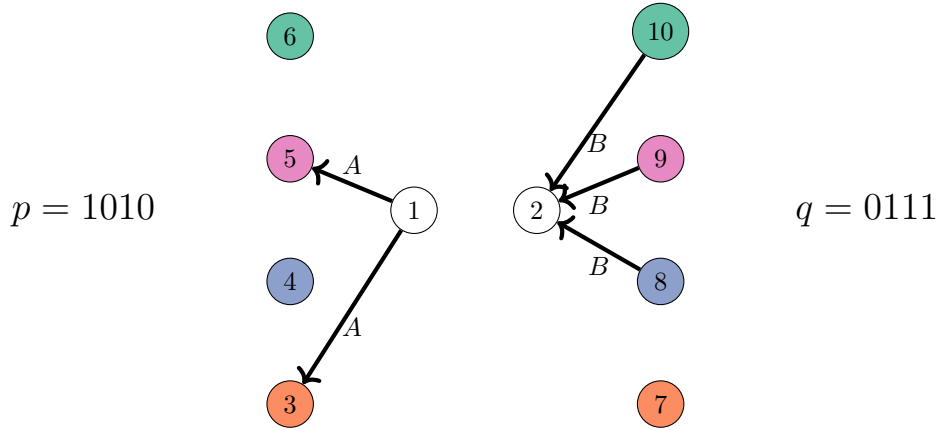
\begin{figure}[h!]
        \centering
    \begin{tikzpicture}
 \node[draw,circle,minimum size=6mm] (1) {$1$};

 \node[draw,circle, right = 1cm of 1,minimum size=6mm] (2) {$2$};

   \node[draw,circle,below left = 0.5cm and 1.2cm of 1,fill=standardcolor] (4) {$4$};

   \node[draw,circle,below =  1cm of 4, fill=namecolor] (3) {$3$};

   \node[draw,circle,above =  1cm of 4,fill=triangularcolor] (5) {$5$};
   \node[draw,circle,above =  1cm of 5,fill=thirdordercolor] (6) {$6$};

 \node[draw,circle,below right = 0.5cm and 1.2cm of 2,fill=standardcolor] (8) {$8$};

   \node[draw,circle,below =  1cm of 8,fill=namecolor] (7) {$7$};

   \node[draw,circle,above =  1cm of 8,fill=triangularcolor] (9) {$9$};
   \node[draw,circle,above =  1cm of 9,fill=thirdordercolor] (10) {$10$};

\path[->,ultra thick]
 (1) edge node[midway, below] {$A$} (3);

 \path[->,ultra thick]
 (1)  edge node[midway, above] {$A$}(5);

\path[->,ultra thick]
 (9)  edge node[midway, below] {$B$}(2);

 \path[->,ultra thick]
 (10)  edge node[midway, below] {$B$}(2);

 \path[->,ultra thick]
 (8)  edge node[midway, below] {$B$}(2);

 \node[draw=none, left = 3cm of 1] (w_a) {\Large $p = 1010$};

 \node[draw=none, right = 3cm of 2] (w_b) {\Large $q = 0111$};
 
    \end{tikzpicture}

        \caption{An example of the construction. Nodes apart from 1 and 2 are split into 2 equal groups -- 3, 4, 5, 6 and 7, 8, 9, 10. If $A_{ij} = 1$ (resp., $B_{ij} = 1$), we draw an $A$-labeled (resp., a $B$-labeled) edge from $i$ to $j$. The $A$-edges can only go from 1 to 3,4,5,6, and the word $p$ determines, which of these edges are present. Likewise, the $B$-edges only go from 7, 8, 9, 10 to 2, according to whether $q$ has 1 or 0 in the corresponding position. Nodes 3 and 7, 4 and 8, 5 and 9, 6 and 10 have the same color but different pairs have a different color. Therefore, the only way we can have $(B\circ A/c)_{12} = 1$ is when 1 has an $A$-edge to some node on the left, and the node of the same color from the right has a $B$-edge to 2. In the example from the figure, this is true for 5 and 9 (which happens because $p$ and $q$ both have 1 in the third position).}
        \label{fig:fancy}
    \end{figure}

We claim that for the instance $(A(p), B(q), \col)$, the value of the quotient binary relation composition at the pair $(1, 2)$ is defined by the equation:
\begin{equation}
\label{eq_fancy_disj}
    (B(q)\circ A(p)/\col)_{12} = \disj(p, q),
\end{equation}
where $\disj(p, q)$ is 1 if and only if there is a position where both $p$ and $q$ have 1. Indeed, if $\disj(p, q) = 1$, taking $j\in [s]$ such that $p_j = q_j = 1$ and then  setting $k_1 = 2 +  j, k_2 = 2 + s + j$, we obtain that $A_{1k_1} = p_j = q_j = B_{k_22} = 1$ and $\col(k_1) = \col(k_2) = 2 + j$, which implies $(B(q)\circ A(p)/\col)_{12} = 1$. On the other hand, if $(B(q)\circ A(p)/\col)_{12} = 1$, then for some $k_1 \neq k_2$ we have $A_{1 k_1} = B_{k_2 2} = 1$ and $\col(k_1) = \col(k_2)$. Since $A_{1k_1} = 1, B_{k_2 2} = 1$, we have $k_1 = 2 + j_1$ and $k_2  = 2 + s + j_2$ for some $j_1,j_2\in [s]$. Since $2 + j_1 =\col(k_1) = \col(k_2) = 2 + j_2$, we derive that $j_1 = j_2 = j$. Hence, we get $p_j = A_{1k_1} = B_{k_22} = q_j$ and $\disj(p, q) = 1$, as required.

We now put the instance $(A(p), B(q), \col)$ to our Transformer $T$ and look at its output int the token indexed by $(1,2)$. By \eqref{eq_fancy_disj}, we have:
\[\sign(y_{12}) = \disj(p, q).\]
We now look at how the output $y_{12}$ is computed in our Transformer on such input. The key observation is that, for any $h = 1, \ldots, H$, the output of the $h$-th attention head in position $(1,2)$ can be written similarly to \eqref{eq_long_frac} as a fraction, where some terms depend solely on $p$ and others solely on $q$:
\[a_{12}^{(h)} = \frac{\alpha^{(h)}(p) + \beta^{(h)}(q) + \gamma^{(h)}}{\lambda^{(h)}(p) + \mu^{(h)}(q) + \nu^{(h)}}, \qquad \alpha^{(h)}(p), \beta^{(h)}(q),\gamma^{(h)}\in\mathbb{R}^d,\qquad \lambda^{(h)}(p),\mu^{(h)}(q), \nu^{(h)}\in\mathbb{R} \]
\emph{regardless of whether the $h$-th attention head uses the standard or the triangular attention}. Indeed, for the case of the standard attention, this is by the same computation as in the proof of Theorem \ref{thm_vc}. Now, for the case of the triangular attention, the same computation goes through but for pairs of tokens of the form $(x_{1\ell},x_{\ell2})$ instead of individual tokens. It remains to notice that for $\ell = 3, \ldots, s + 2$, the value of this pair is determined by $p$, and for $\ell = s +3, \ldots, 2s + 2$, the value of this pair is determined by $q$ (and for $s =1,2$, the value of this pair is fixed).

The rest of the proof is identical to the corresponding part in the proof of Theorem \ref{thm_vc}.  Similarly to 
\eqref{eq_f}, we can now write:
\begin{equation}
\label{eq_f2}
\disj(p,q) = \sign\left(\mathcal{N}\left(x_{1,2}
+ W_O\begin{pmatrix}
   \frac{\alpha^{(1)}(p) + \beta^{(1)}(q) + \gamma^{(1)}}{\lambda^{(1)}(p) + \mu^{(1)}(q) + \nu^{(1)}}\\
    \vdots\\
    \frac{\alpha^{(H)}(p) + \beta^{(H)}(q) + \gamma^{(H)}}{\lambda^{(H)}(p) + \mu^{(H)}(q) + \nu^{(H)}}\end{pmatrix}\right)\right).
    \end{equation}
Then we define a hypothesis class $H$ by considering parts of \eqref{eq_f2} that depend on $p$ as inputs and parts that depend on $q$ as parameters. On the one hand, since $d, H$ and the size of $\mathcal{N}$ are assumed to be $n^{o(1)}$, the VC dimension of this class is $n^{o(1)}$  by Theorem 2.3 in~\cite{goldberg1995bounding}. On the other hand, its VC dimension is lower bounded by the VC dimension of the set of columns of the matrix $\disj(p, q)$, which is at least $s =  n/2 -1$ as established in the proof of Proposition \ref{prop_disj}.
    \end{proof}


\section{Experiments and Results}\label{experiments_section}

In this section, we systematically compare the performance of standard, triangular, third-order and \name~attention in four tasks: (i) Indicator of the 1st coordinate in the Function Composition (with  $f=g$ ), (ii) Binary Relation Composition (with $A=B$), (iii) Match3 (over position-aware permutations) and (iv) Quotient Binary Relation Composition (with $B=A^T$). To obtain a tighter upper bound on the capability of the standard attention, we chose to implement simple special cases of the tasks (see Section \ref{app_experimental_setup} for all the details on data generation). To compare these attention mechanisms, we evaluate their performance (accuracy per epoch) and training time (accuracy per minute) on the test sets of these tasks. Furthermore, Section \ref{app_complexity} provides a thorough analysis in terms of computational performance, evaluating forward pass times, GPU and memory usage. Code for our experiments can be found at \href{https://anonymous.4open.science/r/strassen-attention-neurips25-434F/}{\includegraphics[height=1em]{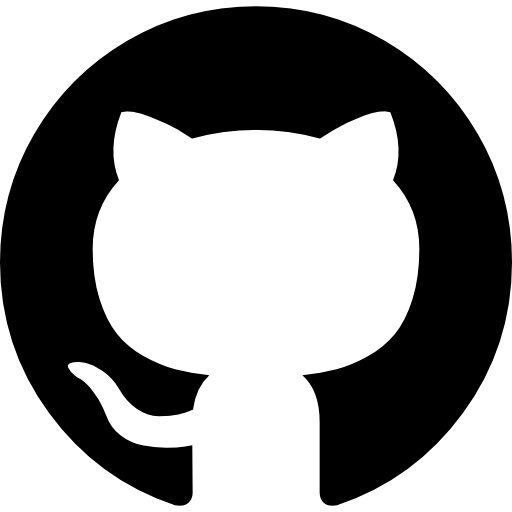} \texttt{strassen-attention-neurips25-434F/}}.


\begin{figure*}[tp!]
    \centering
    \includegraphics[width=.95\textwidth]{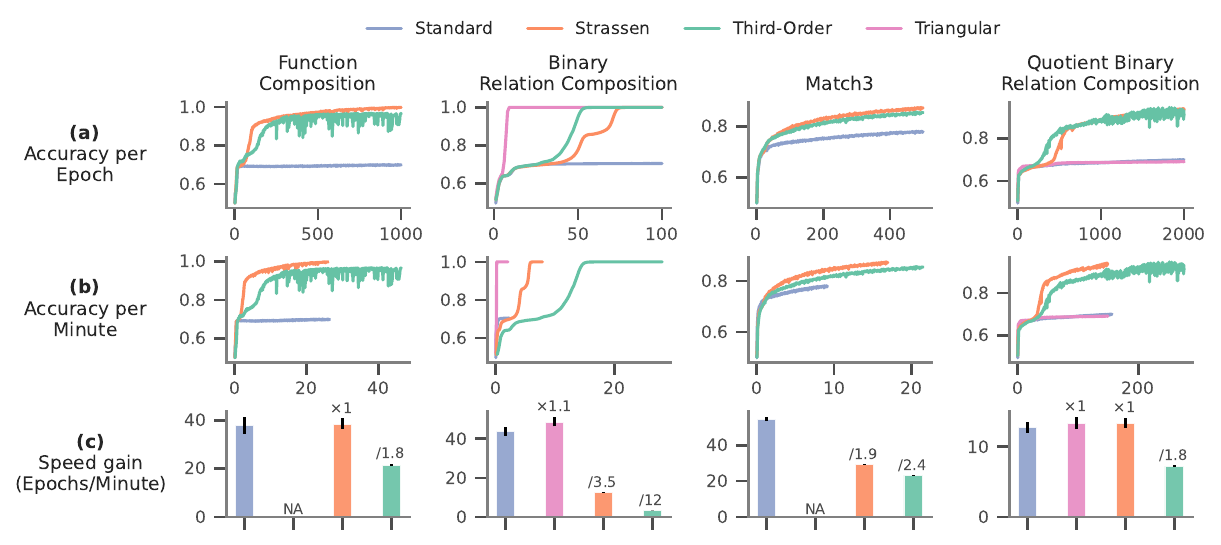}
    \vspace{-5px}
\caption{Accuracy as a function of (a) the number of epochs, (b) training time (forward plus backward runtime per epoch), and (c) speed gain (measured as epochs per minute). Performance for each task is presented as the median accuracy over 8 independent runs on data outside the training set. For readability, we truncated the binary relation composition and quotient binary relation composition learning curves to 100 and 2000 epochs, respectively.}\label{fig:accuracy_vs_time_epochs}
\end{figure*}

Figure \ref{fig:accuracy_vs_time_epochs} displays our main experimental results. First, both the \name~and third-order attentions are the only two mechanisms that present high accuracy levels across all tasks. Note that \name~attention displays slight advantages on the Function Composition and Match3 tasks (Figure \ref{fig:accuracy_vs_time_epochs}a). Second, \name~attention consistently outperforms third-order attention in training time (to peak performance), across all tasks (Figure \ref{fig:accuracy_vs_time_epochs}b). Third, the advantages of \name~attention are further exemplified by displaying speed gains (i.e., number of epochs per minute) that match that of the standard attention in the Function Composition and Quotient Binary Relation Composition tasks, and are always superior to that of the third-order attention (Figure \ref{fig:accuracy_vs_time_epochs}c). Fourth, perhaps unsurprisingly, triangular attention outperforms all attention mechanisms in terms of training time at the Binary Relation Composition task (Figure \ref{fig:accuracy_vs_time_epochs}b, second column). Indeed, \name~and the third-order attentions need to learn the triangular structure of the task, a knowledge that is already structurally embedded in the triangular attention. Fifth, although the third-order attention presents similar accuracy per epoch trend to that of \name~attention, its learning dynamics seems to be significantly more unstable, particularly for the Function Composition and Quotient Binary Relation Composition tasks. Sixth, a clear advantage of \name~attention over the triangular and standard attentions was observed for the Quotient Binary Relation Composition task (Figure \ref{fig:accuracy_vs_time_epochs}b, last column). Lastly, it is noteworthy that the triangular attention framework has a smaller applicability scope, and therefore could not be run on the Function Composition and Match3 tasks (without changing the data presentation from sequences to matrices).

\section{Experimental Setup} \label{app_experimental_setup}

\subsection{Datasets}

We create dedicated datasets to evaluate our models across all four tasks. Each task consists of $5 \times 10^4$ examples. Below, we detail the data generation process for each task, with explanations of key components and structures.

\subsubsection{Function Composition}

The task of function composition involves determining whether a specific condition holds for a given sequence derived from a function \( f \). Each example in the dataset is represented as a tuple \( (X, y) \), where:

\begin{itemize}
    \item \( X = (\bot, f(0), f(1), \dots, f(n-1)) \) is an input sequence of length \( n+1 \). The first token, \( \bot \), is a query token indicating the position where the output is required.
    \item \( n \) is sampled uniformly from the range \( [N_{\text{min}}, N_{\text{max}}] \), with \( N_{\text{min}} = 25 \) and \( N_{\text{max}} = 30 \).
    \item \( y \in \{0, 1\} \) is a binary label that indicates whether the condition \( f(f(0)) = 0 \) is satisfied.
\end{itemize}

The dataset generation process ensures that the sequences are random but incorporates specific constraints to maintain diversity and balance (approximately 50\% positive labels). Algorithm~\ref{alg:function_composition} outlines the data generation procedure.

\begin{algorithm}[H]
   \caption{Dataset Generation for Function Composition}
   \label{alg:function_composition}
\begin{algorithmic}
   \STATE {\bfseries Input:} \( N_{\text{min}}, N_{\text{max}} \)
   \STATE {\bfseries Output:} Dataset
   \FOR{\( \_ = 1 \) {\bfseries to} \( 5 \times 10^4 \)}
      \STATE Sample \( n \sim \text{Uniform}(N_{\text{min}}, N_{\text{max}}) \)
      \STATE Generate random sequence \( X = x_0 x_1 \dots x_{n-1} \) with \( x_i \sim \text{Uniform}(\{0, 1, 2, \dots, n-1\}) \)
      \STATE Sample \( y \sim \text{Uniform}(\{0, 1\}) \)
      \IF{\( y = 1 \) {\bfseries and} \( x_{x_0} \neq 0 \)}
         \STATE Set \( x_{x_0} \gets 0 \) \COMMENT{Ensure \( f(f(0)) = 0 \)}
      \ELSIF{\( y = 0 \) {\bfseries and} \( x_{x_0} = 0 \)}
         \STATE Set \( x_0 \sim \text{Uniform}(i \in \{1, 2, \dots, n-1\} \mid x_i \neq 0) \) \COMMENT{Ensure \( f(f(0)) \neq 0 \)}
      \ENDIF
      \STATE Prepend query token: \( X \gets (\bot, X) \)
      \STATE Add \( (X, y) \) to the dataset
   \ENDFOR
\end{algorithmic}
\end{algorithm}

\paragraph{Explanation of Examples  (Table~\ref{tab:fun_comp})} In the sequence \( ( \bot, 3, 0, 5, 1, 0, \dots ) \), the label \( y = 0 \) implies that the condition \( f(f(0)) = 0 \) does not hold. Specifically: \( f(0) = 3 \), and  \( f(3) = 1 \neq 0 \). Thus, the condition \( f(f(0)) = 0 \) is \emph{false}. In the sequence \( ( \bot, 4, 1, 3, 5, 0, \dots ) \), the label \( y = 1 \) indicates that the condition \( f(f(0)) = 0 \) is satisfied. Specifically: \( f(0) = 4 \), and  \( f(4) = 0 \). Thus, the condition \( f(f(0)) = 0 \) is \emph{true}.

\begin{table}[H]
\centering
\begin{tabular}{|c|c|}
\begin{tabular}{ccccccc}
& \textcolor{gray}{0} & \textcolor{gray}{1} & \textcolor{gray}{2} & \textcolor{gray}{3} & \textcolor{gray}{4} & \textcolor{gray}{\dots} \\
\colorbox{namecolor}{\textcolor{white}{$\bot$}} & \fbox{\textcolor{namecolor}{\textbf{3}}} & 0 & 5 & \fbox{\textcolor{namecolor}{\textbf{1}}} & 0 & \dots \\
\colorbox{standardcolor}{\textcolor{white}{$\bot$}} & \fbox{\textcolor{standardcolor}{\textbf{4}}} & 1 & 3 & 5 & \fbox{\textcolor{standardcolor}{\textbf{0}}} & \dots 
\end{tabular}  & 
\begin{tabular}{c}
     \\
     0 \\
     1 
\end{tabular} \\
\textbf{Sequence} \( X \) & \textbf{Label} \( y \) \\
\end{tabular}
\caption{Example sequences for the Function Composition task. The label \( y \) depends on whether \( f(f(0)) = 0 \).}
\label{tab:fun_comp}
\end{table}

\subsubsection{Binary Relation Composition} 

The task of binary relation composition involves determining whether a transitive relation exists between two elements in a binary relation matrix. Given two relations \( A, B \in \{0,1\}^{m \times m} \), where \( m = \sqrt{n} \), and a pair of elements \( i, j \in \{0,1,...,m-1\} \), the goal is to check whether there exists another element \( k \in \{0,1,...,m-1\} \) such that both relations \( A_{ik} \) and \( B_{kj} \) hold. For simplicity, we define \( R \in \{0,1\}^{m \times m} \) and set \( A = R \) and \( B = R \). Each example in the dataset is represented as a tuple \( (X, Y) \), where:

\begin{itemize}
    \item \( X = \textsf{flatten}(R) \) is an input sequence of length \( n = m^2 \) of a flatten boolean matrix \( R \in \{0, 1\}^{m \times m} \) representing the binary relation. Each element \( R_{ij} \) is independently set to \( 1 \) with a probability \( P\) between $0$ to $1$, where \( P \) is a parameter independent on the input length.
    
    \item \( m \) is sampled uniformly from the range \( [N_{\text{min}}, N_{\text{max}}] \), with \( N_{\text{min}} = 6 \) and \( N_{\text{max}} = 8 \).
    
    \item \( Y \in \{0, 1\}^{n} \) is a list of binary labels that at position $k = i \times m + j$ indicates whether there is a composition of relations between element $i$ and $j$ is satisfied.
\end{itemize}

The dataset generation process ensures randomness in \( X \) while adhering to the constraints for \( Y \). For \( m \) in the range \( [N_{\text{min}}, N_{\text{max}}] \), the probability \( P \) is set to $32.5\%$ to achieve a balanced proportion of positive labels. Algorithm~\ref{alg:binary_relation_composition} provides the detailed data generation procedure.

\begin{algorithm}[H]
   \caption{Dataset Generation for Binary Relation Composition}
   \label{alg:binary_relation_composition}
\begin{algorithmic}
   \STATE {\bfseries Input:} \( N_{\text{min}}, N_{\text{max}}, P \)
   \STATE {\bfseries Output:} Dataset
   \FOR{\( \_ = 1 \) {\bfseries to} \( 5 \times 10^4 \)}
      \STATE Sample \( m \sim \text{Uniform}(N_{\text{min}}, N_{\text{max}}) \)
      \STATE Generate boolean matrix \( R \in \{0,1\}^{m \times m} \), where each element \( R_{ij} = 1 \) with probability \( P \)
      \STATE Initialize \( Y = [\quad] \) \COMMENT{An empty list for labels}
      \FOR{\( i = 0 \) {\bfseries to} \( m - 1\)}
         \FOR{\( j = 0 \) {\bfseries to} \( m - 1\)}
               \IF{there exists \( k \) such that \( R_{ik} = 1 \) {\bfseries and} \( R_{kj} = 1 \)}
                  \STATE Append \( 1 \) to \( Y \)
               \ELSE
                  \STATE Append \( 0 \) to \( Y \)
               \ENDIF
         \ENDFOR
      \ENDFOR
      \STATE Add \( (\textsf{flatten}(R), Y) \) to the dataset
   \ENDFOR
\end{algorithmic}
\end{algorithm}

\paragraph{Explanation of Examples (Table~\ref{tab:binary_rel_comp})}

Consider the binary relation matrix \( R \) shown in Table~\ref{tab:binary_rel_comp}: For \( i = 0 \), \( j = 4 \), there exists \( k = 2 \) such that \( R_{02} = 1 \) and \( R_{24} = 1 \). Thus, \( Y_{0 \times 6 + 4} = 1 \). For \( i = 5 \), \( j = 0 \), \( R_{50} = 1 \) is the unique candidate, but $R_{00} = 0$. Thus, \( Y_{5 \times 6 + 0} = 0 \).

\begin{table}[H]
\centering
\begin{tabular}{|c|c|}
\begin{tabular}{ccccccc}
  & \textcolor{gray}{0} & \textcolor{gray}{1} & \textcolor{gray}{2} & \textcolor{gray}{3} & \textcolor{gray}{4} & \textcolor{gray}{5} \\
\textcolor{gray}{0} & 0 & 0 & \fbox{\textcolor{standardcolor}{\textbf{1}}} & 1 & 0 & 0 \\ 
\textcolor{gray}{1} & 0 & 1 & 1 & 0 & 0 & 0 \\ 
\textcolor{gray}{2} & 1 & 0 & 1 & 0 & 1 & 0 \\ 
\textcolor{gray}{3} & 0 & 0 & 0 & 0 & 0 & 0 \\ 
\textcolor{gray}{4} & 0 & 0 & \fbox{\textcolor{standardcolor}{\textbf{1}}} & 1 & 0 & 0 \\ 
\textcolor{gray}{5} & 1 & 0 & 0 & 0 & 0 & 0 
\end{tabular} & 
\begin{tabular}{cccccc}
\textcolor{gray}{0} & \textcolor{gray}{1} & \textcolor{gray}{2} & \textcolor{gray}{3} & \textcolor{gray}{4} & \textcolor{gray}{5} \\
1 & 0 & 1 & 0 & \colorbox{standardcolor}{\textcolor{white}{\textbf{1}}} & 0 \\ 
1 & 1 & 1 & 0 & 1 & 0 \\ 
1 & 0 & 1 & 1 & 1 & 0 \\ 
0 & 0 & 0 & 0 & 0 & 0 \\ 
1 & 0 & 1 & 0 & 1 & 0 \\ 
\colorbox{namecolor}{\textcolor{white}{\textbf{0}}} & 0 & 1 & 1 & 0 & 0 
\end{tabular} \\
\textbf{Matrix} \( R \) & $R \circ R$ \\
\end{tabular}
\caption{Examples of binary relation matrix and corresponding composition. Here, $X = \textsf{flatten}(R)$ and $Y = \textsf{flatten}(R \circ R)$.}
\label{tab:binary_rel_comp}
\end{table}

\subsubsection{Match3} 

The Match3 task involves determining a target label sequence \( Y \) for a given input sequence \( X \). Each example consists of an input sequence \( X \) of size \( n \) and a label sequence \( Y \) of the same size, where for every index \( i \in \{0,1,...,n-1\}\), \( Y_i \) represents the target value for \( X_i \). Each example is represented as a tuple \( (X, Y) \), where:

\begin{itemize}
    \item \( X = (x_0, x_1, \dots, x_{n-1}) \) is a pseudo-random sequence of integers sampled from the range \( [0, M-1] \), where \( M = 37 \).
    \item \( Y = (y_0, y_1, \dots, y_{n-1}) \) is a binary sequence, with \( y_i \in \{0, 1\} \), where each value $y_i$ indicates if the token $x_i$ satisfied the Match3 condition in \( X \).
    \item \( n \) is sampled uniformly from the range \( [N_{\text{min}}, N_{\text{max}}] \), where \( N_{\text{min}} = 30 \) and \( N_{\text{max}} = 35 \).
\end{itemize}

The dataset generation process aims to balance the distribution of ones in \( Y \) across four predefined bins corresponding to percentage ranges: \( [0, 25)\% \), \( [25, 50)\% \), \( [50, 75)\% \), and \( [75, 100]\%\). Algorithm~\ref{alg:match3} outlines the data generation procedure. 

\begin{algorithm}[H]
   \caption{Dataset Generation Algorithm for Match3}
   \label{alg:match3}
\begin{algorithmic}
   \STATE {\bfseries Input:} \( N_{\text{min}}, N_{\text{max}}, \) \(D\) \COMMENT{Input $D$ is the dataset size}
   \STATE {\bfseries Output:} Dataset
   \STATE Initialize four empty bins \COMMENT{Each bin corresponding to percentage ranges of ones in sequences: \( [0, 25)\% \), \( [25, 50)\% \), \( [50, 75)\% \), and \( [75, 100]\% \)}
   \STATE $N_b\gets (D / 10) / 4$
   \FOR{\( i = 1 \) {\bfseries to} \( 5 \times 10^3 \)}
      \STATE Randomly select \texttt{skewness} \( \sim \text{Uniform}(1, 40) \) \COMMENT{An initial percentage distribution of ones in the sequence}
      \STATE Sample \( n \sim \text{Uniform}(N_{\text{min}}, N_{\text{max}}) \)
      \STATE Generate a pseudo-random sequence \( X = (x_0, x_1, \dots, x_{n-1}) \), where \( x_i \sim \text{Uniform}(\{0, 1, \dots, M-1\}) \) ensuring the percentage of tokens that satisfied Match3 condition is at least \texttt{skewness}
      \STATE Compute \( Y = (y_0, y_1, \dots, y_{n-1}) \) based on Match3 condition
      \STATE Calculate the percentage of ones in \( Y \)
      \STATE Add \( (X, Y) \) to the corresponding \texttt{bin} if size(\texttt{bin}) $< N_b$ 
   \ENDFOR

   \FOR{each \texttt{bin} in bins}
      \WHILE{size(\texttt{bin}) $\neq (D/4)$}
         \STATE Randomly sample an example \( (X, Y) \) from \texttt{bin}
         \STATE Apply the same permutation to \( X \) and \( Y \)
         \STATE Add the permuted pair \( (X^*, Y^*) \) to \texttt{bin}
      \ENDWHILE
   \ENDFOR
   \STATE Add all examples from the bins to the dataset
\end{algorithmic}
\end{algorithm}

\paragraph{Explanation of Examples (Table~\ref{tab:match3})} Consider the example sequence \( X \) and its corresponding label \( Y \) in Table~\ref{tab:match3}:

\begin{itemize}
    \item The input sequence \( X = (6, 9, 9, 9, 7, 10, 9, 34, 9, 9, 30, \dots) \) contains pseudo-random integers between \( 0 \) and \( M-1 \) (\( M = 37 \)).
    \item The label sequence \( Y = (1, 0, 0, 0, 1, 1, 0, 1, 0, 0, 1, \dots) \) has a skewed distribution of ones.
    \item For instance, \( y_5 = 1 \) indicates that the element \( x_5 = 10 \) satisfies the property, because $x_7 = 34$ and $x_{10} = 30$ holds, $x_5 + x_7 + x_{10} = 74 = 2 \times M$.
\end{itemize}

\begin{table}[H]
\centering
\begin{tabular}{c|c|}
\begin{tabular}{r} 
\\
\textbf{Sequence} \( X \) \\
\textbf{Label} \( Y \) 
\end{tabular}  & \begin{tabular}{ccc ccc ccc ccc}
\textcolor{gray}{0} & \textcolor{gray}{1} & \textcolor{gray}{2} & \textcolor{gray}{3} & \textcolor{gray}{4} & \textcolor{gray}{5} & \textcolor{gray}{6} & \textcolor{gray}{7} & \textcolor{gray}{8} & \textcolor{gray}{9} & \textcolor{gray}{10} & \textcolor{gray}{\dots} \\
6 & 9 & 9 & 9 & 7 & 10 & 9 & \fbox{\textcolor{thirdordercolor}{\textbf{34}}} & 9 & 9 & \fbox{\textcolor{thirdordercolor}{\textbf{30}}} & \dots \\
1 & 0 & 0 & 0 & 1 & \colorbox{thirdordercolor}{\textcolor{white}{\textbf{1}}} & 0 & 1 & 0 & 0 & 1 & \dots \\ 
\end{tabular}
\end{tabular}
\caption{Example sequence for Match3 task with \( M = 37 \).}
\label{tab:match3}
\end{table}

\subsubsection{Quotient Binary Relation Composition}

The task of quotient binary relation composition involves determining whether a transitive relation exists between two elements in a binary relation matrix while incorporating an additional constraint based on a coloring function \( \textsf{col}: [0,m-1] \to [0, m-1] \). For simplicity, we define \( R \in \{0,1\}^{m \times m} \) and set \( A = R \) and \( B = A^T \). Each example in the dataset is represented as a tuple \( (X, Y) \), where:
\begin{itemize}
    \item \( X = \textsf{flatten}(R) \) is an input sequence of length \( n = m^2 \) obtained by flattening the boolean matrix \( R \), where each element \( R_{ij} \) is independently set to \( 1 \) with probability \( P \), which is independent of the input length.
    
    \item \( \textsf{col} \) is a function that assigns a unique color to each element in \( [0, m-1] \), with colors sampled uniformly from the range \( [0, m-1] \).
    
    \item \( m \) is sampled uniformly from the range \( [N_{\text{min}}, N_{\text{max}}] \), with \( N_{\text{min}} = 6 \) and \( N_{\text{max}} = 8 \).
    
    \item \( Y \in \{-100, 0,1\}^{n} \) is a list of binary labels, where the position \( k = i \times m + j \) indicates whether the quotient binary relation composition between elements \( i \) and \( j \) is satisfied if and only if $i\neq j$, otherwise $Y_{k} = -100$.
\end{itemize}
The dataset generation process ensures randomness in \( R \) while adhering to the constraints for \( Y \). For \( m \) in the range \( [N_{\text{min}}, N_{\text{max}}] \), the probability \( P \) is set to \( 43.3\% \) to achieve a balanced proportion of positive labels. Algorithm~\ref{alg:quotient_binary_relation_composition} provides the detailed data generation procedure.

\begin{algorithm}[H]
   \caption{Dataset Generation for Quotient Binary Relation Composition}
   \label{alg:quotient_binary_relation_composition}
\begin{algorithmic}
   \STATE {\bfseries Input:} \( N_{\text{min}}, N_{\text{max}}, P \)
   \STATE {\bfseries Output:} Dataset
   \FOR{\( \_ = 1 \) {\bfseries to} \( 5 \times 10^4 \)}
      \STATE Sample \( m \sim \text{Uniform}(N_{\text{min}}, N_{\text{max}}) \)
      \STATE Generate boolean matrix \( R \in \{0,1\}^{m \times m} \) where each element $R_{ij}=1$ with probability \( P \)
      \STATE Generate a coloring function \( \textsf{col}: [0, m-1] \to [0, m-1] \) by assigning colors uniformly
      \STATE Initialize \( Y = [\quad] \) \COMMENT{An empty list for labels}
      \FOR{each pair \( (i,j) \) in \( [0, m-1] \times [0, m-1] \)}
         \STATE Append \( 1 \) to \( Y \) if \( i \neq j \) and there exist \( k_1, k_2 \) such that:
         \STATE \quad \( R_{ik_1} = 1 \), \( R_{jk_2} = 1 \), \( \textsf{col}(k_1) = \textsf{col}(k_2) \), and \( k_1 \neq k_2 \)
         \STATE Otherwise, if $i\neq j$, append \( 0 \) to \( Y \), else append $-100$ to $Y$
      \ENDFOR
      \STATE Add \( (\textsf{flatten}(R), Y) \) to the dataset
   \ENDFOR
\end{algorithmic}
\end{algorithm}

\paragraph{Explanation of Examples (Table~\ref{tab:quotient_binary_rel_comp})} Consider the binary relation matrix \( R \) shown in Table~\ref{tab:quotient_binary_rel_comp}. For \( i = 2 \), \( j = 4 \), there exist elements \( k_1 = 4 \) and \( k_2 = 5 \) such that \( R_{24} = 1 \), \( R_{45} = 1 \), and \( \textsf{col}(4) = 2 = \textsf{col}(5) \), with \( k_1 \neq k_2 \). Thus, \( Y_{2 \times 7 + 4} = 1 \). 

\begin{table}[H]
\centering
\begin{tabular}{|c|c|c|}
\begin{tabular}{cccccccc}
  & \textcolor{gray}{0} & \textcolor{gray}{1} & \textcolor{gray}{2} & \textcolor{gray}{3} & \textcolor{gray}{4} & \textcolor{gray}{5} & \textcolor{gray}{6} \\
\textcolor{gray}{0} & 0 & 1 & 1 & 1 & 1 & 0 & 0 \\ 
\textcolor{gray}{1} & 0 & 0 & 0 & 0 & 0 & 1 & 1 \\ 
\textcolor{gray}{2} & 1 & 1 & 0 & 0 & \fbox{\textcolor{triangularcolor}{\textbf{1}}} & 1 & 0 \\ 
\textcolor{gray}{3} & 0 & 0 & 0 & 0 & 0 & 1 & 1 \\ 
\textcolor{gray}{4} & 1 & 0 & 0 & 0 & 0 & \fbox{\textcolor{triangularcolor}{\textbf{1}}} & 1 \\
\textcolor{gray}{5} & 0 & 0 & 1 & 1 & 0 & 0 & 1 \\ 
\textcolor{gray}{6} & 0 & 1 & 0 & 0 & 0 & 1 & 0
\end{tabular} & \begin{tabular}{c}
\empty  \\
5 \\ 
4 \\ 
5  \\ 
1  \\ 
\textcolor{triangularcolor}{\textbf{2}} \\
\textcolor{triangularcolor}{\textbf{2}}  \\ 
3 
\end{tabular}  & 
\begin{tabular}{ccccccc}
\textcolor{gray}{0} & \textcolor{gray}{1} & \textcolor{gray}{2} & \textcolor{gray}{3} & \textcolor{gray}{4} & \textcolor{gray}{5} & \textcolor{gray}{6} \\
\empty & 1 & 1 & 1 & 1 & 0 & 0 \\ 
0 & \empty & 0 & 0 & 0 & 1 & 1  \\ 
1 & 1 & \empty & 0 & \colorbox{triangularcolor}{\textcolor{white}{\textbf{1}}} & 1 & 0 \\ 
0 & 0 & 0 & \empty & 0 & 1 & 1 \\ 
1 & 0 & 0 & 0 & \empty & 1 & 1 \\
0 & 0 & 1 & 1 & 0 & \empty & 1 \\ 
0 & 1 & 0 & 0 & 0 & 1 & \empty
\end{tabular} \\
\textbf{Matrix} \( R \) & $\col$ & $R^T\circ R/\col$ \\
\end{tabular}
\caption{Examples of binary relation matrix and corresponding quotient composition.}
\label{tab:quotient_binary_rel_comp}
\end{table}





\newpage

\subsection{Experimental Protocol}

\paragraph{Dataset Splitting.} The dataset is split into a training and validation set. We randomly select 90\% of the data for training, and the remaining 10\% is used for validation. The validation set is used to monitor the model's performance as well as test set. The validation data is kept within the same distribution as the training set to ensure that the evaluation is conducted in an \emph{in-distribution} manner.

\paragraph{Evaluation.} The evaluation metric for all tasks is accuracy. To compute the accuracy during training or evaluation, we first calculate the accuracy for each batch individually. This is done by dividing the number of correctly predicted labels by the total number of labels in that batch. These per-batch accuracies are then aggregated across all batches within an epoch. The overall accuracy for the epoch is obtained by taking the mean of the per-batch accuracies. Note that this is not equivalent to just taking the number of correctly predicted labels divided by the number of all labels in the whole dataset (due to variable lengths of examples). Both approaches converge to the same value as the number of examples grows.

\paragraph{Training Setting.} We fix random seeds across all experiments. Each task and model configuration is evaluated using 8 different random seeds, and the reported results include the median across these runs. This approach mitigates the effects of random weight initialization and stochastic data sampling during training. Additionally, training parameters such as learning rate, batch size, and training duration equally specified for each task across model, as summarized in Table~\ref{tab:hyperparameters}.

\begin{table}[h]
\centering
\label{tab:hyperparameters}
\begin{tabular}{lccccccc}
\toprule
\textbf{Task} & \textbf{Model}           & $d$ & $h$ & $B$               & $\rho$      & $T$  & $p$       \\ 
\midrule
\multirow{3}{*}{\shortstack[l]{Function\\Composition}} & Standard & 16 & 1   & 2500             & $1 \cdot 10^{-3}$ & 1000 epochs & 0.3 \\
& Third-Order & 16 & 1   & 2500 & $1 \cdot 10^{-3}$ & 1000 epochs& 0.3 \\
& \name    & 16 & 1   & 2500             & $1 \cdot 10^{-3}$ & 1000 epochs & 0.3 
\\
\midrule
\multirow{4}{*}{\shortstack[l]{Binary Relation\\Composition}} & Standard & 16 & 1   & 2500             & $1 \cdot 10^{-3}$ & 200 epochs & 0.3 \\
& Triangular  & 16 & 1   & 2500             & $1 \cdot 10^{-3}$ & 200 epochs & 0.3 \\
& Third-Order & 16 & 1   & 2500 & $1 \cdot 10^{-3}$ & 200 epochs & 0.3\\
& \name    & 16 & 1   & 2500             & $1 \cdot 10^{-3}$ & 200 epochs & 0.3 \\
\midrule
\multirow{3}{*}{\shortstack[l]{Match3}}& Standard    & 128 & 2   & 2500             & $1 \cdot 10^{-3}$ & 500 epochs & 0.4 \\
& Third-Order & 128 & 2   & 2500 & $1 \cdot 10^{-3}$ & 500 epochs & 0.4 \\
& \name    & 128 & 2   & 2500             & $1 \cdot 10^{-3}$ & 500 epochs &  0.4    
\\
\midrule
\multirow{4}{*}{\shortstack[l]{Quotient\\Binary Relation\\Composition}} & Standard &  16 & 1   & 2000             & $1 \cdot 10^{-3}$  & 3000 epochs & 0.3\\
& Triangular  &  16 & 1  & 2000            & $1 \cdot 10^{-3}$  & 3000 epochs  & 0.3 \\
& Third-Order & 16 & 1   & 2000 & $1 \cdot 10^{-3}$ & 3000 epochs  & 0.3\\
& \name    &  16 & 1   & 2000 & $1 \cdot 10^{-3}$ & 3000 epochs & 0.3 \\
\bottomrule
\end{tabular}
\caption{Training parameter settings for tasks and models. For all models and tasks, we run experiments on 8 different seeds with only one attention layer. Here $d$ is embedding dimension, $h$ is the number of heads, $B$ is the batch size, $\rho$ is the learning rate, $T$ is training duration and $p$ is the dropout outside attention mechanism. We did not use batch normalization for any task.} 
\end{table}

\paragraph{Implementation Details} We implement all models using PyTorch framework \cite{Ansel_PyTorch_2_Faster_2024} with Opt-Einsum library \cite{daniel2018opt}. We employ AdamW optimizer for all training tasks without a learning rate scheduler, ensuring a consistent optimization strategy across experiments. We use the Binary Cross-Entropy loss as the objective function. In order to handle padding, we used attention masks, preventing the model from attending to padded positions, and thus ensuring no changes in the model's outputs. For the target sequence, we assign a value of $-100$ to positions corresponding to padded tokens. This ensures that during loss and accuracy calculations, only tokens with values other than $-100$ are considered. We achieve this by applying a \texttt{mask}, where predictions and targets are filtered as $\texttt{pred = pred[mask]}$ and \texttt{target = target[mask]}, respectively.

\paragraph{Hardware and Resource Utilization} We conduct all experiments on high-performance NVIDIA GPUs. Specifically, we execute on \emph{NVIDIA A100} GPUs with 80GB of memory tasks requiring extensive computational resources, such as Match3 and Quotient Binary Relation Composition. For tasks with lower computational demands, such as Function Composition and Binary Relation Composition, We use \emph{NVIDIA A40} GPUs with 48GB of memory. 

\section{Further Experiments} \label{app_complexity}

We evaluate the computational performance of attention mechanisms exclusively on an \emph{NVIDIA RTX A6000} GPU. This analysis focuses on three metrics: (a) forward pass time, (b) GPU utilization, and (c) memory utilization
. These metrics provide insights into the computational efficiency and hardware constraints associated with different configurations of hidden dimension and input length. Below, we detail the evaluation methodology and the observed limitations in the Figure~\ref{fig:attention_mechanisms_analysis}. 

\begin{figure}[h]
    \centering
    \includegraphics[width=\textwidth]{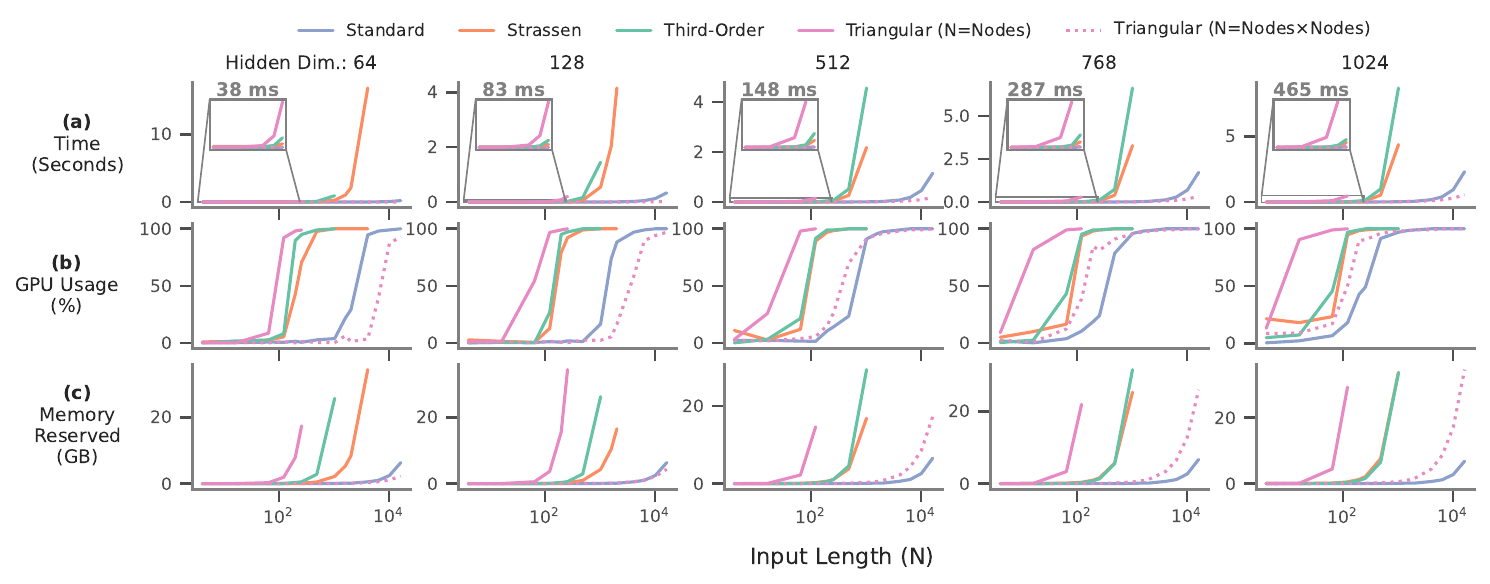}
    \vspace{-20px}
    \caption{
        Analysis of computational performance for each attention mechanisms presented in this work: \standardcolored, \namecolored, \thirdordercolored~and \triangularcolored~ attention. Metrics include (a) forward pass time in seconds, (b) GPU utilization in percentage, and (c) memory reserved in GB on an \emph{NVIDIA RTX A6000}. Experiments are conducted across varying input lengths (4 to 16,384) and five hidden dimensions (64, 128, 256, 768, 1024). Each result represents the average of the median over 8 runs on 100 random sequences (100 tensors with batch size 1) passed through the respective attention mechanisms.}\label{fig:attention_mechanisms_analysis}
\end{figure}


\paragraph{Time}

We measure the forward pass time as the delta time before and after passing a $1 \times n \times d$ random tensor through the attention mechanism, where $n$ is input length and $d$ hidden dimension. We implement this using \texttt{time.time()} function from Python. As we expect, the results indicate that forward pass time increases significantly for higher hidden dimensions and input lengths.

\paragraph{GPU Usage}

We monitor GPU utilization using the \texttt{pynvml} library to query the current CUDA device. Specific configurations of attention mechanisms, such as \name~ attention with large hidden dimensions (e.g., 512 or higher) and long input lengths (e.g., 1600 or higher), result in 100\% GPU usage. 

\paragraph{Memory Reserved}

We record the memory reserved during tensor processing using CUDA memory management functions from PyTorch: \texttt{torch.cuda.memory\_reserved}. 

Our computational performance results show that (Figure \ref{fig:attention_mechanisms_analysis}):
\begin{itemize}
    \item The computational constraints of \textbf{\name} attention become evident for hidden dimensions of 512 or higher and input lengths exceeding 1600.
    \item Those of \textbf{Third-Order} attention appear with even lower hidden dimensions (64 or 128) and input lengths exceeding 1600.
    \item \textbf{Standard and Triangular} ($n=$Nodes$\times$Nodes) GPU memory usage does not shows a bottleneck even for configurations with hidden dimensions of 2048 and input lengths of 16,384.
    \item \textbf{Triangular} ($n=$Nodes) Limitations appear for hidden dimensions of 1024 or higher and input lengths as low as 196, with increasing severity for longer input sequences.
\end{itemize}

\section{Future directions and limitations} Our results pave the way for several directions of future research. First, we have introduced a new method to obtain limitations for one-layer Transformer, based on splitting the VC dimension. We hope that the method could be applied to other architectures and other tasks involving complex reasoning in more naturalistic settings (e.g., semantic parsing). Second, given the differences observed in learning stability between the third-order and \name~attentions, the latter seems to be associated with a smoother loss landscape, an hypothesis that needs to be confirmed and studied. Third, \name~attention can be adapted to incorporate interactions that involve more than three tokens, possibly capturing more complex patterns in data. Yet, practical learnability of such higher-order interactions needs to be assessed. Finally, and related to the previous point, although the main goal of our work was to gain a deeper theoretical understanding of the abilities of the Transformer, our conclusions are limited by using toy tasks. Our next step is to test the theoretical advantages of the \name~attention using specific benchmarks or even in learning methods such as masked language modeling. At the same time, the possible applications of the \name~attention are not limited to compositionality but could extend to such areas as knowledge graphs, protein structure prediction and others.







\end{document}